%%%%%%%% ICML 2026 EXAMPLE LATEX SUBMISSION FILE %%%%%%%%%%%%%%%%%

\documentclass{article}

% Recommended, but optional, packages for figures and better typesetting:
\usepackage{microtype}
\usepackage{graphicx}
\usepackage{subcaption}
\usepackage{booktabs} % for professional tables
\usepackage{multirow}

% hyperref makes hyperlinks in the resulting PDF.
% If your build breaks (sometimes temporarily if a hyperlink spans a page)
% please comment out the following usepackage line and replace
% \usepackage{icml2026} with \usepackage[nohyperref]{icml2026} above.
\usepackage{hyperref}

% Attempt to make hyperref and algorithmic work together better:

% Use the following line for the initial blind version submitted for review:
% \usepackage{icml2026}

% For preprint, use
% \usepackage[preprint]{icml2026}

% If accepted, instead use the following line for the camera-ready submission:
\usepackage[accepted]{icml2026}

\usepackage{amsmath}
\usepackage{amssymb}
\usepackage{mathtools}
\usepackage{amsthm}

% if you use cleveref..
\usepackage[capitalize,noabbrev]{cleveref}

%%%%%%%%%%%%%%%%%%%%%%%%%%%%%%%%
% THEOREMS
%%%%%%%%%%%%%%%%%%%%%%%%%%%%%%%%
\theoremstyle{plain}
\newtheorem{theorem}{Theorem}[section]
\newtheorem{proposition}[theorem]{Proposition}
\newtheorem{lemma}[theorem]{Lemma}

\theoremstyle{definition}
\newtheorem{definition}[theorem]{Definition}

\theoremstyle{remark}
\newtheorem{remark}[theorem]{Remark}

%%%% additional package %%%%
\RequirePackage{bm}
%%%% end for additional package %%%%

% Todonotes is useful during development; simply uncomment the next line
%    and comment out the line below the next line to turn off comments
\usepackage[disable,textsize=tiny]{todonotes}
% \usepackage[textsize=tiny]{todonotes}

% The \icmltitle you define below is probably too long as a header.
% Therefore, a short form for the running title is supplied here:
\icmltitlerunning{Non-Parametric Probabilistic Robustness}

\begin{document}

\twocolumn[
  \icmltitle{Non-Parametric Probabilistic Robustness:\\A Conservative Risk Estimator under Unknown Perturbation Distributions}

  % It is OKAY to include author information, even for blind submissions: the
  % style file will automatically remove it for you unless you've provided
  % the [accepted] option to the icml2026 package.

  % List of affiliations: The first argument should be a (short) identifier you
  % will use later to specify author affiliations Academic affiliations
  % should list Department, University, City, Region, Country Industry
  % affiliations should list Company, City, Region, Country

  % You can specify symbols, otherwise they are numbered in order. Ideally, you
  % should not use this facility. Affiliations will be numbered in order of
  % appearance and this is the preferred way.
  % \icmlsetsymbol{equal}{*}

  \begin{icmlauthorlist}
    \icmlauthor{Zheng~Wang}{yyy}
    \icmlauthor{Yi~Zhang}{yyy}
    \icmlauthor{Siddartha~Khastgir}{yyy}
    \icmlauthor{Carsten~Maple}{yyy}
    \icmlauthor{Xingyu~Zhao}{yyy,yyy_1}
  \end{icmlauthorlist}

  \icmlaffiliation{yyy}{WMG, University of Warwick, Coventry, United Kingdom}
    \icmlaffiliation{yyy_1}{Wuhan University, Wuhan, China}
  % \icmlaffiliation{comp}{Company Name, Location, Country}
  % \icmlaffiliation{sch}{School of ZZZ, Institute of WWW, Location, Country}
  \icmlcorrespondingauthor{Xingyu~Zhao}{xingyu.zhao@warwick.ac.uk}
  % \icmlcorrespondingauthor{Firstname2 Lastname2}{first2.last2@www.uk}

  % You may provide any keywords that you find helpful for describing your
  % paper; these are used to populate the "keywords" metadata in the PDF but
  % will not be shown in the document
  \icmlkeywords{Adversarial Training, Probabilistic Robustness, Adversarial Robustness}

  \vskip 0.3in
]

% this must go after the closing bracket ] following \twocolumn[ ...

% This command actually creates the footnote in the first column listing the
% affiliations and the copyright notice. The command takes one argument, which
% is text to display at the start of the footnote. The \icmlEqualContribution
% command is standard text for equal contribution. Remove it (just {}) if you
% do not need this facility.

% Use ONE of the following lines. DO NOT remove the command.
% If you have no special notice, KEEP empty braces:
\printAffiliationsAndNotice{}  % no special notice (required even if empty)
% Or, if applicable, use the standard equal contribution text:
% \printAffiliationsAndNotice{\icmlEqualContribution}

\begin{abstract}
Deep learning (DL) models, despite their remarkable success, remain vulnerable to small input perturbations that can cause erroneous outputs, motivating the recent proposal of probabilistic robustness (PR) as a complementary alternative to adversarial robustness (AR). However, existing PR formulations assume a fixed and known perturbation distribution, an unrealistic expectation in practice. To address this limitation, we propose non-parametric probabilistic robustness (NPPR), a more practical PR metric that does not rely on any predefined perturbation distribution. Following the non-parametric paradigm in statistical modeling, NPPR learns an optimized perturbation distribution directly from data, enabling conservative PR evaluation under distributional uncertainty. We further develop an NPPR estimator based on a Gaussian Mixture Model (GMM), covering various input-dependent and input-independent perturbation scenarios. Theoretical analyses establish the relationships among AR, PR, and NPPR. Extensive experiments on CIFAR-10, CIFAR-100, and Tiny ImageNet across ResNet18/50, WideResNet50 and VGG16 validate NPPR as a more practical robustness metric, showing conservative (lower) PR estimates compared to assuming those common perturbation distributions used in state-of-the-arts.
% Deep learning (DL) models, despite their remarkable success, remain vulnerable to small input perturbations that can cause erroneous outputs, motivating probabilistic robustness (PR) as a complementary notion to adversarial robustness (AR) for stochastic reliability assessment {\textcolor{red}{!!!?? this is very wrong!}}. However, existing PR formulations assume a fixed, known perturbation distribution, which is often unavailable or misspecified in practice. To address this limitation, we propose non-parametric probabilistic robustness (NPPR), a more conservative PR estimator over an admissible set of perturbation distributions. We instantiate NPPR with a tractable estimator (GMM-based) that supports four dependency structures (independent, label-, input-, and joint-dependent perturbations). We show that NPPR provably interpolates between AR and PR. Experiments on CIFAR-10/100 and TinyImageNet across multiple architectures show that NPPR yields consistently lower (more conservative) PR estimates than PR computed under common assumed distributions (e.g., Gaussian/Uniform), with up to 40\% reduction in representative settings.{\textcolor{red}{XZ: can we make  sure we donot change the abstract? if we change it please make it in red. A single word that imprecise will miss lead the reviwer and kill the paper nowadays; let us please be careful}}
\end{abstract}

\section{Introduction}
\label{sec:intro}

\begin{figure*}[t]
  \centering
  % \fbox{\rule{0pt}{2in} \rule{0.9\linewidth}{0pt}}
   \includegraphics[width=1\linewidth]{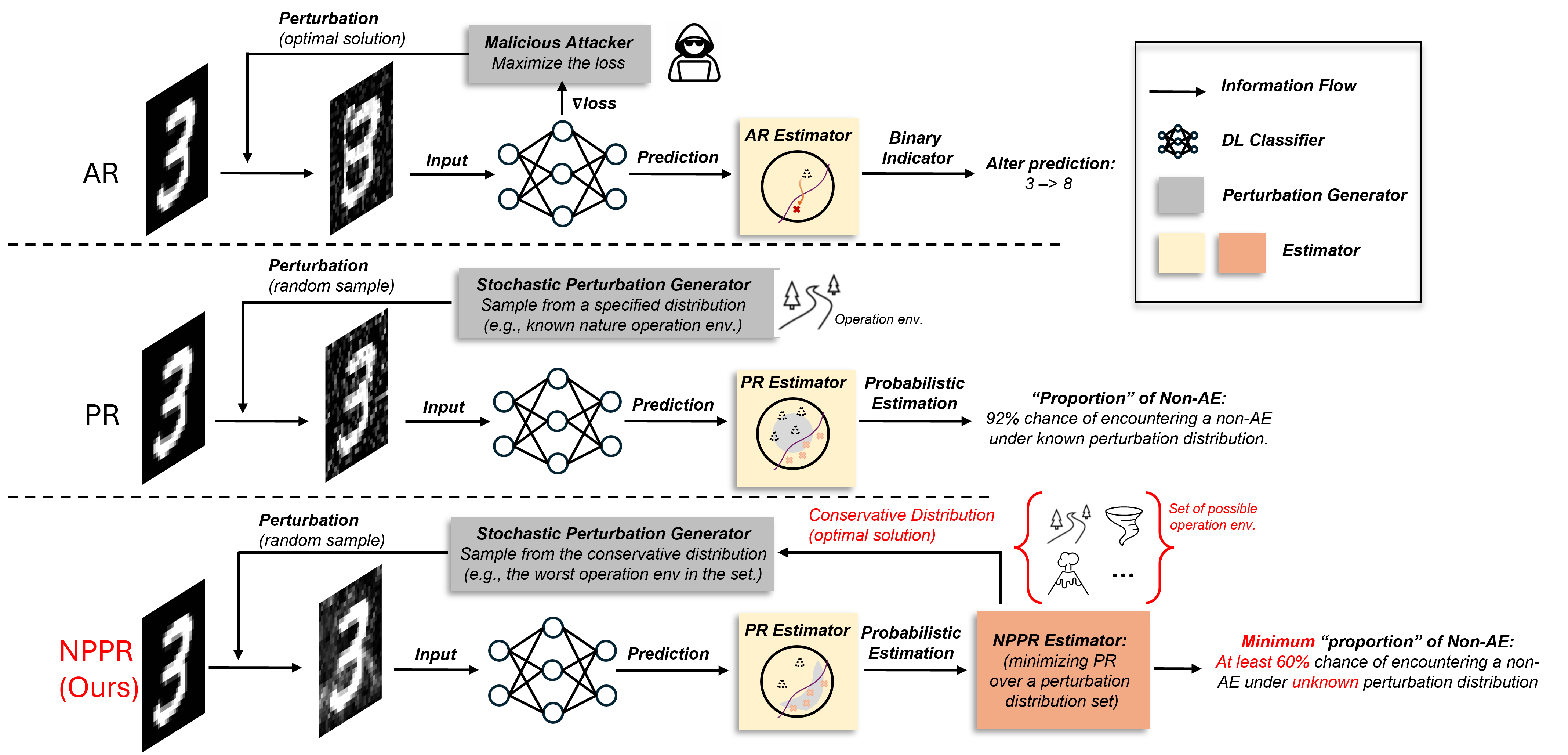}
   \caption{\textbf{Illustration of robustness estimators under different perturbation assumptions.} AR evaluates worst-case robustness against optimization-based adversarial perturbations, assessing whether such perturbations can alter the prediction. PR estimates robustness by estimating the ``proportion'' of Non-AEs under known perturbation distribution. The proposed NPPR provides a minimal ``proportion'' over an admissible perturbation distribution set.} 
   \label{fig:compare_nppr}
\end{figure*}

Deep learning (DL) models, despite their success across perception, language, and control tasks, are known to be vulnerable to small input perturbations that can drastically alter their outputs. Such perturbations, commonly referred to as adversarial examples (AEs), reveal a fundamental weakness in modern DL models. Since the first discovery of the phenomenon of AEs \cite{szegedy2013intriguing,goodfellow2014explaining,papernot2016limitations}, \textit{robustness} has emerged as one of the most actively studied properties. Among them, adversarial robustness (AR) \cite{huang_survey_2020,meng2022adversarial,zuhlke2024adversarial,chakraborty2021survey} is arguably the most extensively studied, focusing on a model’s ability to withstand \textit{deterministic} and \textit{worst-case}\footnote{The terms ``worst-case'' and ``deterministic'' indicate that AR studies typically aim to find the optimized AE, e.g., the one that maximizes loss or lies closest to the original input within a norm-ball. Note, although some approaches used to detect such deterministic/worst-case AEs involve \textit{stochastic} procedures (e.g., random starts in PGD attacks), these are normally algorithmic choices for numerical optimisation, which does not change the fact that the underlying optimal AE is deterministic.} AEs deliberately crafted by malicious attackers from a security perspective. In contrast, probabilistic robustness (PR)~\cite{webb2018statistical,weng2019proven,wang2021statistically,pautov2022cc,couellan2021probabilistic,baluta2021scalable,tit2021efficient,zhang2022proa,pmlr_v206_tit23a,zhang2024protip,robey2022probabilistically,zhao2025probabilistic,Zhang_2025_ICCV,zhang2025probabilistic} has recently been proposed as a complementary notion, not only from the reliability perspective (concerning average risk \cite{webb2018statistical,wang2021statistically,zhao2025probabilistic}) but also as an extension of the security viewpoint, aiming to quantify the \textit{likelihood} that a DL model maintains correct behavior under \textit{random} perturbations. Such random perturbations may arise from natural stochastic sources, e.g., sensor white noise in benign operational environments, or from unsophisticated attackers who rely on brute-force random-noise strategies rather than carefully optimized manipulations (i.e., attacks typically proposed in AR studies).

Intuitively, PR addresses the question: ``What is the likelihood of encountering AEs under stochastic perturbations?''. Such stochastic perturbations should be generated from \textit{a given distribution}, referred to as the ``input model'' in, e.g., \cite{webb2018statistical,weng2019proven,zhang2024protip}. Indeed, this perturbation distribution is assumed to be known and fixed in the very first formal definition of PR \cite{webb2018statistical} and has since been adopted by \textit{all} subsequent studies in both PR assessment \cite{webb2018statistical,weng2019proven,wang2021statistically,pautov2022cc,couellan2021probabilistic,baluta2021scalable,tit2021efficient,zhang2022proa,pmlr_v206_tit23a,zhang2024protip} and PR training \cite{wang2021statistically,robey2022probabilistically,zhang2024prass,Zhang_2025_ICCV,zhang2025probabilistic} works. However, in practice, \textit{the perturbation distribution is rarely known a priori}. Although Gaussian or Uniform distributions are commonly used in the literature, these choices are merely illustrative and not intended to represent any universal perturbation distributions. As noted in those PR works, the assessor is expected to specify an appropriate perturbation distribution on a case-by-case basis, supported by justified evidence from the target application---an expectation that is often infeasible in practice.

To bridge the gap arising from the unknown perturbation distribution, we introduce non-parametric\footnote{The term ``non-parametric'' follows its usage in statistical modelling, referring to approaches that do not assume a fixed parametric form for the underlying distribution but instead infer it from data.} probabilistic robustness (NPPR), a more practical PR metric that \textit{does not rely on any predefined perturbation distribution}. Instead, NPPR derives the \textit{conservative} perturbation distribution from a set of distributions (representing, e.g., all possible noise from sensors). This yields the minimum PR over the admissible perturbation distribution set, i.e., providing a lower bound (see Fig.~\ref{fig:compare_nppr}).

% conservative (lower) probabilistic estimation for encountering non-AEs.

Specifically, after formally defining NPPR, we develop an NPPR estimator that fits a Gaussian Mixture Model (GMM) with Multilayer Perceptron (MLP) heads and bicubic up-sampling to optimize the perturbation distribution from data for the most conservative PR estimates. The estimator considers four dependency scenarios between perturbations, inputs, and labels. We further provide a theoretical analysis of the relationships among AR, PR, and NPPR under these scenarios. Experiments on ResNet18/50, WideResNet50 and VGG16 across CIFAR-10, CIFAR-100, and Tiny ImageNet datasets are conducted to validate our approach. 

% We further provide convergence and consistency analyses of the proposed estimator, and empirically demonstrate its effectiveness across multiple network architectures and datasets.

In summary, our main contributions are as follows:
\begin{enumerate}
\item \textbf{Formal metric:} We extend the PR concept to a non-parametric setting, formally define the NPPR metric, and theoretically relate it to AR and PR.
%We extend the PR concept to a non-parametric setting and formally define the NPPR metric, with theoretical analysis of its relationship to AR \& PR.  
\item \textbf{Estimator:} We develop an NPPR estimator that learns a conservative perturbation distribution for robustness evaluation under unknown perturbation distribution, with theoretical convergence guarantees.
\item \textbf{Open-source repository:} We provide full experimental details and release our implementation publicly at \url{https://github.com/squarewang2077/nppr}.
\end{enumerate}

\section{Preliminary and Related Works}
\label{sec:preliminary_and_related_work}

% \begin{figure*}[t]
%   \centering
%    \includegraphics[width=0.95\linewidth]{figs/architecture.png}
%    \caption{\textbf{Training pipeline of NPPR estimator.} Our training pipeline comprises three main components: (i) \textit{MLP heads} that model the dependency structure of perturbations, (ii) a GMM for sampling latent perturbations, and (iii) \textit{bicubic up-sampling} to map perturbations to the input space. Given a clean image $\bm{x}$, we extract intermediate features from the classifier and pass them through the MLP heads to parameterize the GMM. Perturbations are subsequently sampled from the GMM and up-sampled to the input resolution via bicubic interpolation. $g_{\mathcal{B}}$ maps the unbounded perturbations into the perturbation budget. The classifier’s logits for the perturbed input $\bm{x}'$ are then used to construct a loss function based on the logit margin between the ground-truth class and the most confident non–ground-truth class. The margin parameter $\kappa$ controls the scale of the gap, following the formulation introduced in the C\&W attack~\cite{carlini2017towards}.}
%    \label{fig:architecture}
% \end{figure*}

Consider a standard classification task, where each input--label pair $(\bm{x}, y) \in \mathcal{X} \times \mathcal{Y}$ is drawn from an unknown data distribution $D$ over $\mathcal{X} \times \mathcal{Y}$. $\mathcal{X} \subseteq \mathbb{R}^d$ denotes the input space and $\mathcal{Y} = \{1, 2, \ldots, C\}$ the label space. A hypothesis $h: \mathcal{X} \to \mathcal{Y}$, with $h \in \mathcal{H}$, represents the classifier under study. The training set $S = \{(\bm{x}_i, y_i)\}_{i=1}^N$ consists of $N$ i.i.d.\ samples drawn from $D$, and $\ell: \mathcal{H} \times \mathcal{Y} \to \mathbb{R}_{+}$ denotes the loss function.

We denote perturbations by $\bm{\varepsilon} \sim \omega$, where $\omega$ is the underlying perturbation distribution, and let $\mathcal{B}$ denote the admissible perturbation budget (e.g., an $L_p$ ball). For convenience in our theoretical analysis, we adopt the notation $\mathfrak{S}$ from prior work on AR~\cite{dohmatob2022non, wang2024implicit} to represent our local robustness metric, and $\mathcal{G}$ for corresponding global robustness metric.

\subsection{Adversarial vs. Probabilistic Robustness}

Although definitions of robustness vary across different DL tasks and model types, it generally refers to a model’s ability to maintain consistent predictions under small input perturbations. Typically, robustness is defined such that all inputs within a perturbation budget $\mathcal{B}$ yield the same prediction, where $\mathcal{B}$ usually denotes an $L_p$-norm ball of radius $\gamma$ around an input~$\bm{x}$. A perturbed input $\bm{x}'$ (e.g., obtained by adding noise to $\bm{x}$) is considered an AE if its predicted label differs from that of~$\bm{x}$. Evaluation methods for robustness are typically based on metrics defined over AEs~\cite{huang_survey_2020,zuhlke2024adversarial}. The formal definitions of AR and PR metrics are introduced in Def.~\ref{def:ar} and  Def.~\ref{def:pr}, respectively.
\begin{definition}[Adversarial Robustness]
\label{def:ar}
Let $\mathbf{1}_{\mathcal{S}(\bm{x})}$ be an indicator function that equals $1$ if $\mathcal{S}$ holds and $0$ otherwise. Given a classifier $h \in \mathcal{H}$, AR around the input $\bm{x}$ is defined as  
\begin{equation}
\label{eq:ar}
\mathfrak{S}_{\mathrm{AR}}(\bm{x}, y) \triangleq 1 - \sup_{\bm{\varepsilon}\in\mathcal{B}}\mathbf{1}_{h(\bm{x} + \bm{\varepsilon}) \neq y},
\end{equation}
where $\mathcal{B} = \{\bm{\varepsilon} \in \mathbb{R}^d \mid \Vert \bm{\varepsilon} \Vert_p \leq \gamma\}$ denotes the perturbation budget.
\end{definition}

\begin{definition}[Probabilistic Robustness]
\label{def:pr}
Reuse notations in Def.~\ref{def:ar}, and let $\omega(\cdot \vert \bm{x})$ denote a perturbation distribution conditioned on $\bm{x}$, whose support lies within $\mathcal{B}$. Then, the PR of an input--label pair $(\bm{x}, y)$ is defined as
\begin{equation}
\label{eq:pr}
\mathfrak{S}_{\mathrm{PR}}(\bm{x}, y, \omega) \triangleq \mathbb{E}_{\bm{\varepsilon} \sim \omega(\cdot \mid \bm{x})}
\Big[\mathbf{1}_{h(\bm{x} + \bm{\varepsilon}) = y}\Big]
\end{equation}
\end{definition}

\begin{remark}[Input-dependency of perturbations]
\label{remark:depend}
While Def.~\ref{def:pr} provides a general formulation of PR in which the perturbation distribution $\omega$ depends on the input $\bm{x}$, in practice such perturbations \textit{may or may not} depend on the specific input. \textit{Input-dependent perturbations} occur when the characteristics of the noise vary with the input itself. For instance, in computer vision, noise may increase in darker regions of an image, motion blur may depend on the object’s velocity. In contrast, \textit{input-independent perturbations} remain statistically identical across all inputs, such as additive white Gaussian noise, disturbances from constant background vibration or temperature drift that affect all camera inputs equally. 
\end{remark}

% \xz{Maybe it is worth to make a remark on the dependecies of omega, give some real-word scenarios when the omega is depdent/indepdent on input x}

% \xz{not sure this is nesssary, ASR metric can be formally defined and introduced in experiments part, unless you are using ASR in your method section}

AR\footnote{A related line of work studies AR through randomized or game-theoretic formulations. \citet{meunier2021mixed} model AEs as a zero-sum game in which both the attacker and the classifier may use mixed strategies, and analyze robustness via mixed Nash equilibria.  Other works further investigate randomized defenses against strong attacks~\cite{pinot2020randomization}, game-theoretic formulations of additive adversarial attacks and defenses~\cite{pal2020game}, and the role of randomization in adversarially robust classification~\cite{gnecco2023role}.  These studies demonstrate that randomized adversaries, randomized classifiers, and equilibrium-based formulations are important alternatives to purely deterministic attack models.} captures the ``worst-case'' scenario in which the generated perturbations yield the AE that maximizes the loss or is closest to the input, typically requiring carefully designed adversarial attack algorithms that often rely on access to model gradients. PR, in contrast, complements AR by estimating the \textit{likelihood} of encountering AEs when perturbations are generated stochastically, ensuring that the risk remains below an acceptable threshold rather than being exactly zero~\cite{webb2018statistical,zhang2022proa,zhang2024protip,weng2019proven,zhao2025probabilistic}. Intuitively, Def.~\ref{def:pr} defines PR as the probability that a model’s prediction remains unchanged under random perturbations of an input~$\bm{x}$. A ``frequentist'' interpretation of this probability is the \textit{limiting relative frequency} of perturbations that do not alter the predicted label over infinitely independent trials \cite{zhang2024protip,zhao2025probabilistic} (see Fig.~\ref{fig:illustration_NPPR}, panel~(b)). We also note that the PR definition in the literature constrains perturbations within a $L_p$-norm ball by enforcing $\Vert \bm{\varepsilon} \Vert_p \leq \gamma$. Equivalently, we assume a noise distribution $\omega$ with zero probability mass outside this budget.
%Again, I am very unsure the point of saying the last sentence.. still unconvinced...

The work \cite{rice2021robustness} bridges the gap between PR and AR by introducing a continuum of robustness notions between average-case and worst-case performance. Their work highlights that AR can be viewed as an extreme point of a broader risk-sensitive robustness spectrum, while PR captures less conservative but potentially more operationally relevant notions of robustness. We refer readers to Appendix~\ref{app_sec:weak_baseline} for more detailed comparison between our NPPR and the work \cite{rice2021robustness}.

\begin{figure*}[t]
  \centering
  % \fbox{\rule{0pt}{2in} \rule{0.9\linewidth}{0pt}}
   \includegraphics[width=0.95\linewidth]{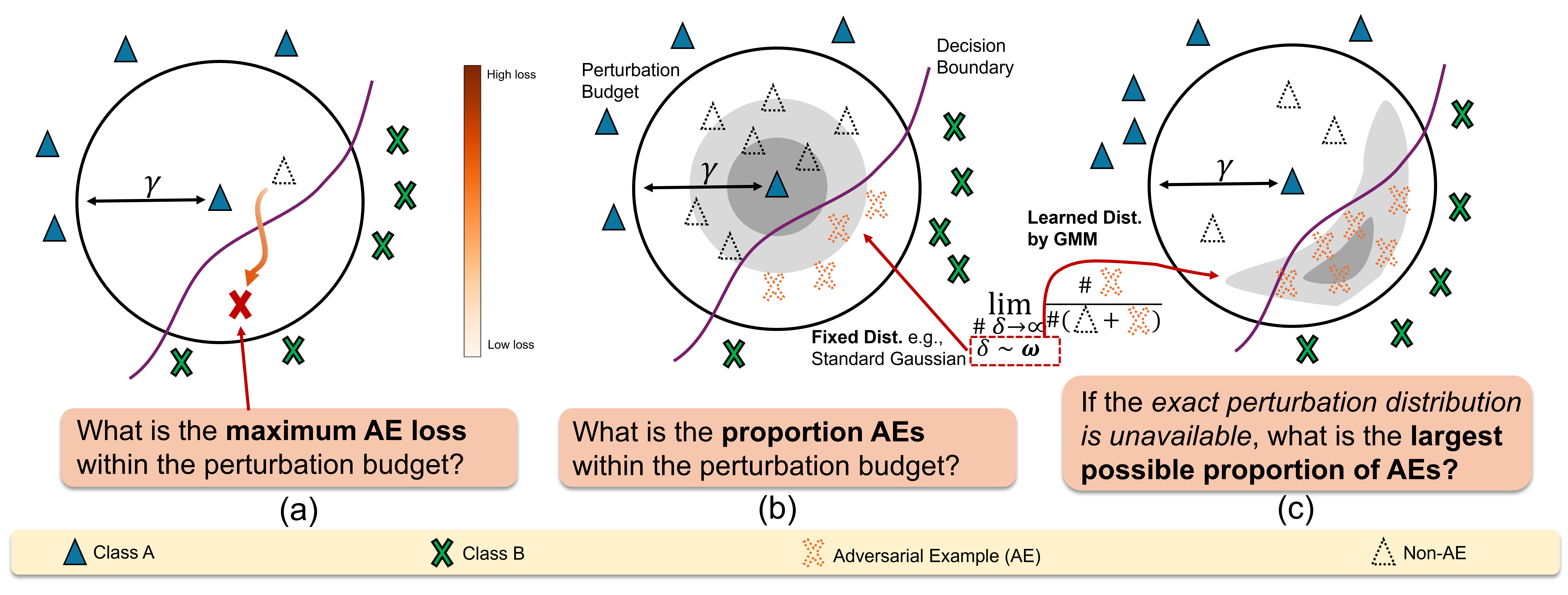}
   \caption{Illustration of robustness evaluation paradigms. Panel~(a) shows AR, which assesses the worst-case loss within a prescribed perturbation budget. Panel~(b) illustrates PR, which measures the "proportion" of non-AEs under a known perturbation distribution. Panel~(c) illustrates NPPR, which evaluates the same quantity under a learned conservative perturbation distribution, providing robustness evaluation under unknown perturbation distribution.}
   \label{fig:illustration_NPPR}
    % \xz{can you change the distributions on the two red arrows to: fix distribution, e.g., Gaussian and optimized distribution by GMM}\label{fig:illustration_NPPR}
\end{figure*}

\subsection{Probabilistic Robustness Assessment}
Recent years have witnessed notable progress in PR research, particularly in terms of its assessment, resulting in the development of a variety of assessment methods across different DL tasks. The earliest study~\cite{webb2018statistical} to formally define and evaluate PR introduced a black-box statistical estimator based on the Multi-Level Splitting method~\cite{kahn1951estimation}, which decomposes the task of estimating the probability of a rare event into several subproblems and is thus suitable for cases where PR is very high. Later, more efficient white-box estimators were developed in~\cite{pmlr_v206_tit23a}. Zhang \textit{et al.}~\cite{zhang2022proa} further investigated PR under functional perturbations such as color shifts and geometric transformations. In addition, PR has been extended to broader applications such as explainable AI~\cite{Huang_2023_ICCV} and text-to-image models~\cite{zhang2024protip}. For a comprehensive overview of PR assessment, readers are referred to~\cite{zhao2025probabilistic}. Beyond assessment, several studies~\cite{wang2021statistically,robey2022probabilistically,zhang2024prass,Zhang_2025_ICCV} have focused on developing training methods to improve PR, with their optimization strategies systematically summarized in the recent benchmark study~\cite{zhang2025probabilistic}.

All aforementioned state-of-the-arts PR formulations rely on a shared assumption that the perturbation noise follows a predefined distribution, which is rarely known in practice. In contrast, our proposed NPPR learns an optimized distribution from data, enabling a conservative evaluation that yields the lowest PR estimate within the admissible distribution set.

%\textcolor{red}{NPPR is related to this direction but addresses a different problem. Rather than studying adversarial risk minimization or defense design under randomized attacker/defender strategies, we focus on PR evaluation when the perturbation distribution is unknown. NPPR defines a conservative robustness metric as the infimum of PR over an admissible family of perturbation distributions, thereby avoiding the need to assume a fixed predefined perturbation distribution. Thus, randomized and game-theoretic adversarial formulations provide important context for our work, while NPPR targets conservative PR evaluation under distributional uncertainty.}

\section{Non-Parametric Probabilistic Robustness}

As illustrated in Fig.~\ref{fig:illustration_NPPR}, an inappropriate perturbation distribution (as in existing PR studies that predefine a fixed perturbation distribution; cf. Fig.~\ref{fig:illustration_NPPR} (b)) may underestimate the true risk. Therefore, an optimized perturbation distribution, adaptively learned from real data, is necessary to overcome the infeasible assumption of a predefined distribution. Accordingly, we formally define our NPPR in Def.~\ref{def:NPPR}, in which the learned optimized perturbation distribution allows PR to be evaluated more conservatively and accurately. 

% Moreover, as discussed in Remark~\ref{remark:depend}, real-world input perturbations can strongly depend on the underlying inputs. Therefore, it is essential to explicitly model this dependency to obtain a more conservative and realistic estimate of robustness.

\begin{definition}[Non-parametric PR]
\label{def:NPPR}
Reusing the notation from Def.~\ref{def:pr}, the NPPR of an input–label pair~$(\bm{x}, y)$ is defined as
\begin{equation}
\label{eq:NPPR}
\mathfrak{S}_{\mathrm{NPPR}}(\bm{x}, y) \triangleq \inf_{\omega\in P_{\bm{\varepsilon}}}\mathbb{E}_{\bm{\varepsilon} \sim \omega(\cdot \mid \bm{x})}
\Big[\mathbf{1}_{h(\bm{x} + \bm{\varepsilon}) = y}\Big],
\end{equation}
where $P_{\bm{\varepsilon}}$ is an admissible distribution set with support all lying within the perturbation budget $\mathcal{B}$.
\end{definition}

\begin{remark}
\label{remark:NPPR_depend}
Similar to Def.~\ref{def:pr}, NPPR also accommodates both input-dependent and input-independent noise distributions (see Remark~\ref{remark:depend}). Based on this property, we design different estimators tailored to each type of dependence, enabling a more accurate and conservative PR evaluation. 
\end{remark} 

Building upon the definitions of local robustness for individual input–label pairs (Def.~\ref{def:ar}, \ref{def:pr}, \ref{def:NPPR}), we extend the concept to a global robustness metric that quantifies the overall robustness of a classifier across the entire data distribution.
\begin{definition}[Global Robustness]
\label{def:global_metic}
Consider input–label pairs $(\bm{x}, y) \sim D$, and let $\mathfrak{S}(\bm{x}, y)$ denote the point-wise robustness metric. The global robustness over the distribution $D$ is defined as
\begin{align}
    \mathcal{G}(D) \triangleq \mathbb{E}_{(\bm{x}, y)\sim D}\left[ \mathfrak{S}(\bm{x}, y)\right].
\end{align}
For simplicity and to avoid ambiguity, we omit the distribution~$D$ from the notation~$\mathcal{G}$ and let $\mathcal{G}_{\mathrm{AR}}$, $\mathcal{G}_{\mathrm{PR}}$, and $\mathcal{G}_{\mathrm{NPPR}}$ denote the respective global metrics for AR, PR, and NPPR.
\end{definition}

\begin{proposition}
\label{prop:ar_pr_nppr}
Considering AR, PR, and NPPR as defined in Def.~\ref{def:ar}, \ref{def:pr}, and~\ref{def:NPPR}, and binary loss function for AR, let~$\mathcal{G}_{\mathrm{AR}}$, $\mathcal{G}_{\mathrm{PR}}$, and~$\mathcal{G}_{\mathrm{NPPR}}$ denote their corresponding global robustness metrics. Given a perturbation distribution $\omega\in P_{\bm{\varepsilon}}$ (either conditional or unconditional) for the perturbation $\bm{\varepsilon}$, we have 
\begin{align}
    \mathcal{G}_{\mathrm{AR}} \leq \mathcal{G}_{\mathrm{NPPR}} \leq \mathcal{G}_{\mathrm{PR}}.    
\end{align}
If we allow $P_{\bm{\varepsilon}}$ to be unrestricted, representing any distributions (including the Dirac delta measure), then the equality holds,
\begin{align}
    \mathcal{G}_{\mathrm{AR}} = \mathcal{G}_{\mathrm{NPPR}}.    
\end{align}
% If $P_{\bm{\varepsilon}}$ includes only continuous distributions and the set of all adversarial perturbations is set of measure zero within $\mathcal{B}$, then the following strict inequality holds,   
% \begin{align}
%     \mathcal{G}_{\mathrm{AR}} < \mathcal{G}_{\mathrm{NPPR}}.    
% \end{align}
\end{proposition}

\begin{proposition}
\label{prop:cond_uncond}
Reuse the condition in Prop.~\ref{prop:ar_pr_nppr}, and let $\mathcal{G}^{\mathrm{c}}$ and $\mathcal{G}^{\mathrm{u}}$ denote global robustness on conditional and unconditional perturbation distributions for AR and NPPR, respectively. Then we have 
\begin{align}
    \mathcal{G}^{\mathrm{c}} \leq \mathcal{G}^{\mathrm{u}}.
\end{align}
\end{proposition}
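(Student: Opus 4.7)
The plan is to reduce both the AR and NPPR cases of Prop.~\ref{prop:cond_uncond} to a single abstract interchange inequality: for any random variable $X$ on a probability space and any family $\{f(a;X)\}_{a\in A}$ of integrable functions,
\[
\mathbb{E}_X\!\bigl[\inf_{a\in A} f(a;X)\bigr] \;\leq\; \inf_{a\in A}\mathbb{E}_X\!\bigl[f(a;X)\bigr].
\]
I would establish this in one line as a preliminary lemma: for any fixed $a_0\in A$ we have $\inf_{a} f(a;X)\leq f(a_0;X)$ pointwise, so taking expectations gives $\mathbb{E}_X[\inf_a f(a;X)]\leq \mathbb{E}_X[f(a_0;X)]$, and then taking the infimum over $a_0$ on the right yields the claim. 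The interpretation is simply that adapting $a$ pointwise to each sample is at least as strong (i.e., attains an infimum no larger) than committing to a single $a$ uniformly over the population.

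Next I would instantiate the inequality twice. For AR with the binary correctness convention used in Prop.~\ref{prop:ar_pr_nppr}, set $a=\bm{\varepsilon}\in\mathcal{B}$ and $f(\bm{\varepsilon};\bm{x},y)=\mathbf{1}_{h(\bm{x}+\bm{\varepsilon})=y}$. The conditional regime lets the worst-case perturbation depend on $(\bm{x},y)$, yielding $\mathcal{G}^{\mathrm{c}}_{\mathrm{AR}}=\mathbb{E}_{(\bm{x},y)\sim D}[\inf_{\bm{\varepsilon}\in\mathcal{B}} f(\bm{\varepsilon};\bm{x},y)]$, while the unconditional regime fixes a single universal perturbation, yielding $\mathcal{G}^{\mathrm{u}}_{\mathrm{AR}}=\inf_{\bm{\varepsilon}\in\mathcal{B}}\mathbb{E}_{(\bm{x},y)\sim D}[f(\bm{\varepsilon};\bm{x},y)]$. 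The abstract inequality delivers $\mathcal{G}^{\mathrm{c}}_{\mathrm{AR}}\leq \mathcal{G}^{\mathrm{u}}_{\mathrm{AR}}$. For NPPR, set $a=\omega\in P_{\bm{\varepsilon}}$ and $f(\omega;\bm{x},y)=\mathbb{E}_{\bm{\varepsilon}\sim\omega(\cdot\mid\bm{x})}[\mathbf{1}_{h(\bm{x}+\bm{\varepsilon})=y}]$. Conditional NPPR allows $\omega$ to depend on $\bm{x}$, whereas unconditional NPPR restricts $\omega$ to a single marginal reused across all inputs, so exactly the same inequality gives $\mathcal{G}^{\mathrm{c}}_{\mathrm{NPPR}}\leq \mathcal{G}^{\mathrm{u}}_{\mathrm{NPPR}}$.

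The main conceptual obstacle is simply articulating the feasible sets precisely: the unconditional family embeds into the conditional family as the constant selectors $(\bm{x},y)\mapsto a^{\star}$, so the conditional infimum is taken over a strictly larger collection and can only be smaller or equal. A minor technical point is measurability of the pointwise minimizer inside the expectation; however, since $\mathcal{B}$ is compact and the integrand is a bounded indicator (or bounded expectation of one), the inequality $\mathbb{E}[\inf]\leq \inf\mathbb{E}$ holds in full generality as a lower-envelope bound and does not require invoking a measurable selection theorem. Thus the proposition follows uniformly for both metrics from the same one-line argument.
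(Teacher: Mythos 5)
Your proposal is correct and follows essentially the same route as the paper's proof: both establish $\mathbb{E}_D[\inf_a f(a;\cdot)]\le\inf_a\mathbb{E}_D[f(a;\cdot)]$ by bounding the pointwise infimum by $f(a_0;\cdot)$ for an arbitrary fixed $a_0$, taking expectations, and then infimizing over $a_0$, and both instantiate this once with $a=\bm{\varepsilon}$ for AR and once with $a=\omega$ for NPPR. Your packaging of the step as a single abstract lemma applied twice, and the remark on measurability, are harmless refinements of the same argument.
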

Prop.~\ref{prop:ar_pr_nppr} shows that the global NPPR metric serves as a more conservative measure compared to PR. Under extreme conditions, NPPR can be as low as AR. Prop.~\ref{prop:cond_uncond} compares the global robustness metrics for the conditional and unconditional cases and demonstrates that the conditional case yields lower robustness than the unconditional case. The detailed proofs can be found in Appendix~\ref{app_sec:props}.

\begin{figure*}[t]
  \centering
\includegraphics[width=0.95\linewidth]{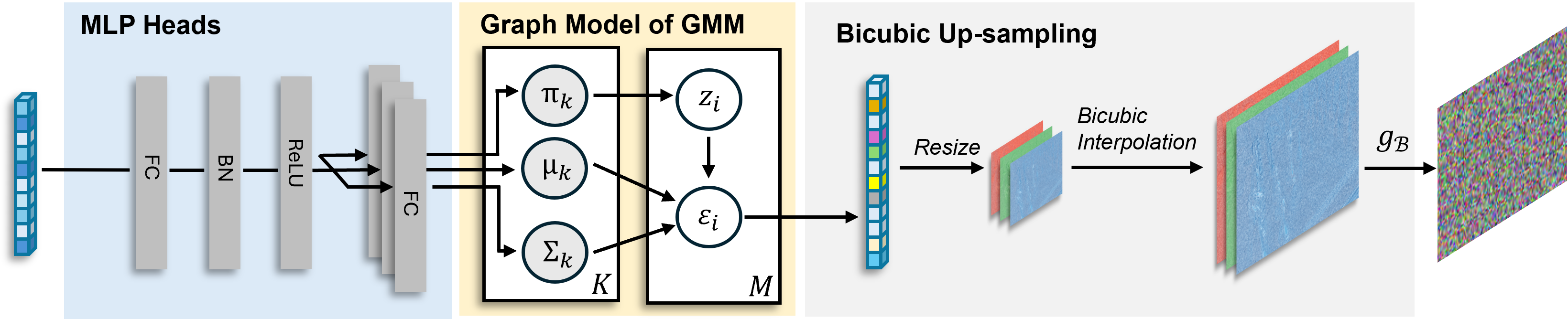}
   \caption{\textbf{Illustration of the main components for deriving the conservative perturbation distribution.} The construction consists of three components: (i) \emph{MLP heads} that parameterize the perturbation distribution, (ii) a \emph{Gaussian mixture model (GMM)} for sampling latent perturbations, and (iii) \emph{bicubic up-sampling} that maps latent perturbations to the input space. The MLP heads output the parameters of the GMM, from which perturbations are sampled and subsequently up-sampled to the input resolution. The mapping $g_{\mathcal{B}}$ ensures that the resulting perturbations lie within the prescribed perturbation budget.}
   \label{fig:architecture}
\end{figure*}

\section{NPPR Estimation}

\subsection{Conservative Distribution via Optimization}

We first obtain the conservative distribution by a minimization problem over a set of distributions. To this end, we construct a learnable perturbation distribution that conservatively characterizes admissible perturbations. As illustrated in Fig.~\ref{fig:architecture}, the construction consists of a parameterized distribution model that generates perturbations within a prescribed budget, which are then used to evaluate PR via Monte Carlo estimation \cite{webb2018statistical}. This enables NPPR to assess robustness without assuming a known perturbation distribution. We obtain this conservative distribution by optimization. With this construction in place, we next formalize the objective function under the resulting perturbation distribution.

\paragraph{Finite-GMM approximation of NPPR}
The preceding construction optimizes a learnable perturbation distribution through a finite Gaussian mixture model with a Gumbel--Softmax relaxation. Here, ``non-parametric'' refers to the metric-level formulation of NPPR, where robustness is defined by optimizing over an admissible class of perturbation distributions rather than a fixed reference distribution. In practice, we approximate this ideal distributional search by restricting the optimization to a finite-$K$ GMM family. Although this restriction may introduce an approximation gap, it remains more expressive than standard PR with a single prescribed perturbation distribution, since the GMM can model multiple perturbation modes through adaptive mixture weights, means, and covariances. The number of components $K$ therefore controls the trade-off between expressiveness and computational efficiency. A formal relation between the finite-GMM estimator and the ideal NPPR objective is given in Appendix~\ref{app:gmm_approx_nppr}.

\paragraph{Objective function} Let $\mathcal{U}$ denote the up-sampling module, which adjusts perturbations to match the input resolution while ensuring that their support lies within the prescribed perturbation budget. Consider a classifier $h$ and a differentiable smooth surrogate loss $\varphi$ that relaxes the hard indicator loss $\mathbf{1}$. Given an input-label pair $(\bm{x}, y) \sim D$ and perturbations sampled from a GMM, our objective function is
\begin{align}
\label{eq:NPPR_gmm}
\mathcal{L}(\bm{\phi}) = \mathbb{E}_{(\bm{x}, y)\sim D}\left[\mathbb{E}_{\bm{\varepsilon} \sim \mathrm{GMM}_{\bm{\phi}}} \!
\Big[\varphi\big({h(\bm{x} + \mathcal{U}(\bm{\varepsilon})),y\big)}\Big]\right]
\end{align}
where $\mathrm{GMM}_{\bm{\phi}} = \mathrm{GMM}(\bm{\pi}_{\bm{\phi}}, \bm{\mu}_{\bm{\phi}}, \Sigma_{\bm{\phi}})$ denotes a parameterized Gaussian mixture distribution, whose parameters are governed by $\bm{\phi}$.

% In practice, since $D$ is unknown, we use its empirical estimates
% \begin{align}
% \label{eq:NPPR_gmm_estimator}
% \widehat{\mathcal{L}}(\bm{\phi}) = \frac{1}{M N}\sum_{i=1}^{N}\sum_{j=1}^{M}\Big[\varphi\big({h(\bm{x}_i + \mathcal{U}(\bm{\varepsilon}_{i, j})),y_i\big)}\Big]
% \end{align}
% where $\bm{\varepsilon}_{i,j}$ denotes the $j$-th perturbation generated for image~$i$ by the GMM. 

We define the loss function~$\varphi$ as the logit margin between the ground-truth class and the most confident non–ground-truth class, with a hyperparameter~$\kappa$ controlling the margin scale. Let $\bm{x}' \!=\! \bm{x} \!+\! \mathcal{U}(\bm{\varepsilon})$ denote the AE. The loss function is

\begin{align}
\label{eq:pr_loss}
    \varphi\left(h(\bm{x}^{\prime}), y\right) \!= \!\operatorname{softplus}\left(h_{y}(\bm{x}^{\prime}) - \max_{j\neq y} h_{j}(\bm{x}^{\prime}) + \kappa \right)
\end{align}
where $\operatorname{softplus}(x) = \log(1 + e^{x})$ is lower-bounded by zero, improves its smoothness near the origin, and behaves approximately linearly when the logit gap is large. We experimented with several variants of the loss function, and found that the \textit{softplus} formulation yields the most stable optimization trajectory.

The above objective induces a learnable perturbation distribution for NPPR estimation. To allow the distribution to adapt to input-dependent perturbation patterns, we parameterize it using lightweight MLP heads.

\paragraph{MLP heads for input-dependent parameterization}  
We parameterize the perturbation distribution using lightweight MLP heads to enable input-dependent modeling. As illustrated in Fig.~\ref{fig:architecture}, the MLP outputs the parameters of a Gaussian mixture model, including the mixture weights $\{\pi_k\}_{k=1}^K$, means $\{\mu_k\}_{k=1}^K$, and covariances $\{\Sigma_k\}_{k=1}^K$. This design allows the perturbation distribution to adapt to different inputs while remaining fully learnable. Additional dependency variants, including input-independent and partially conditioned settings, cf.~Fig.~\ref{fig:arch_variants} in Appendix~\ref{app:exp_setting}.

The first type is \textbf{(i) Independence}, where the perturbation distribution is entirely independent of the input–label pairs. This corresponds to scenarios such as additive Gaussian noise, external disturbances, and temperature drift, as discussed in Remark~\ref{remark:depend}. \textbf{(ii) Label dependence:} As shown in Fig.~\ref{fig:arch_variants} (a), the perturbation distribution depends on the ground-truth labels through a learnable label embedding on mixture proportions. In this case, the noise is characterized by label-specific variations rather than input-dependent ones. \textbf{(iii) Input dependence:} The perturbation distribution depends solely on the input features extracted from the classifier. \textbf{(iv) Joint dependence:} As illustrated in Fig.~\ref{fig:arch_variants} (b), the mixture weights $\pi_k$ are linked to ground truth labels, while the means and covariances of the mixture components, $\mu_k$ and $\Sigma_k$, are conditioned on the input features.

\paragraph{GMM with Gumbel--softmax}
We model the perturbation distribution using a GMM, which represents a distribution as a finite mixture of $K$ Gaussian components. For each input, a latent component index $\bm{z}_i$ is sampled according to the mixture weights, followed by sampling from the corresponding Gaussian component:
\begin{align}
\bm{z}_i &\sim \mathrm{Categorical}(\pi_1,\ldots,\pi_K), \\
\bm{\varepsilon}_i \mid \bm{z}_i &\sim \mathcal{N}(\bm{\mu}_{z_i}, \bm{\Sigma}_{\bm{z}_i}).
\end{align}
In our setting, perturbations are not observed explicitly, and the discrete sampling of $\bm{z}_i$ prevents direct gradient-based optimization. To enable end-to-end training, we employ the Gumbel--Softmax trick~\cite{jang2016categorical,maddison2016concrete}, which provides a differentiable relaxation of categorical sampling. Specifically, let $g_k \overset{i.i.d.}{\sim} \mathrm{Gumbel}(0,1)$ and define the relaxed mixture weights
\begin{align}
w_{\tau,k} = \mathrm{softmax}\!\left(\frac{\log \pi_{\bm{\phi},k} + g_k}{\tau}\right), \quad k \in [K].
\end{align}
The relaxed perturbation is given by $\bm{\varepsilon}^{\prime} = \sum_{k=1}^K w_{\tau,k}\bm{\varepsilon}_k$, where $\bm{\varepsilon}_k \sim \mathcal{N}(\bm{\mu}_{\bm{\phi},k}, \Sigma_{\bm{\phi},k})$.
This relaxation yields a smooth surrogate objective
\begin{align}
\label{eq:NPPR_gumbel}
L_{\tau}(\bm{\phi}) = \mathbb{E}_{(\bm{x},y)}\mathbb{E}_{\bm{g},\bm{\varepsilon}_1,\ldots,\bm{\varepsilon}_K}
\left[\varphi\left(h(\bm{x} + \mathcal{U}(\bm{\varepsilon}^{\prime})),y\right)\right],
\end{align}
which is differentiable with respect to $\bm{\phi}$. And the empirical objective on training set $S$ is 
\begin{align}
L_{\tau,S}(\bm{\phi}) = \frac{1}{N}\sum_{i=1}^{N}\mathbb{E}_{\bm{g},\bm{\varepsilon}_{1},\ldots,\bm{\varepsilon}_K}\left[\varphi\left(h(\bm{x}_i + \mathcal{U}(\bm{\varepsilon}^{\prime})),y_i\right)\right].
\end{align}
Further details of the Gumbel--Softmax construction are provided in Appendix~\ref{app_sec:gumbel}. We next quantify the approximation error introduced by the Gumbel--Softmax relaxation and show that the relaxed objective converges exponentially fast to the original objective as the temperature $\tau$ decreases.

\begin{lemma}[Gumbel--Softmax approximation error]
\label{lem:objective_gap_main}
Let $\mathcal{L}(\bm{\phi})$ and $\mathcal{L}_{\tau}(\bm{\phi})$ denote the original and Gumbel--Softmax relaxed objectives, respectively. Assume that the loss function $\varphi$ is $L$-Lipschitz and that the perturbation satisfies $\Vert\bm{\varepsilon}\Vert\leq \gamma$ almost surely. If the underlying mixture admits a unique dominant component, then there exists a constant $C>0$ such that
\begin{align}
\big| \mathcal{L}(\bm{\phi}) - \mathcal{L}_{\tau}(\bm{\phi}) \big|
\le (K-1)\,L\,\gamma e^{-\frac{C}{\tau}}.
\end{align}
\end{lemma}
A more detailed version of this lemma and its proof are provided in Section~\ref{app_sec:converge_analysis} of Appendix~\ref{app_sec:proof}.

\paragraph{Bicubic up-sampling}
Since input images typically reside in high-dimensional spaces, the covariance matrix of the perturbation distribution scales as $\mathcal{O}(d^2)$ with respect to the input dimension~$d$, making direct computation infeasible. To address this, we perform perturbation modeling in a lower-dimensional feature space and map the resulting perturbations back to the input space using bicubic interpolation, which is computationally efficient and preserves spatial smoothness~\cite{dong2016accelerating,keys2003cubic}. Detailed interpolation formulas are provided in the Appendix~\ref{app:bicubic}. 

To ensure that the support of the perturbation distribution lies within the prescribed $L_p$-norm ball, we apply a scaled $\tanh$ mapping multiplied by the perturbation budget $\gamma$, a common constraint mechanism in the AR literature~\cite{chen2021towards,huang2019black}.

\subsection{Algorithms}

\begin{algorithm}[ht]
\caption{Conservative Perturbation Distribution via Optimization}
\label{alg:train_gmm4pr}
\begin{algorithmic}[1]
\REQUIRE
Frozen classifier $h$, feature extractor $f$,
training set $S_{\mathrm{train}}$,
parametric model $\mathrm{MLP}_{\bm{\phi}}$,
temperature $\tau$,
loss $\varphi$,
iterations $T$,
step sizes $\{\eta_t\}_{t=1}^T$.
\ENSURE
Trained parameters $\widehat{\bm{\phi}}$.
\STATE Initialize $\bm{\phi}_1$.
\FOR{$t=1$ {\bf to} $T$}
    \STATE Sample mini-batch $B_t\subset S_{\mathrm{train}}$.
    \STATE Keep correctly classified samples $B_t^c\subset B_t$.
    \STATE Draw auxiliary noise $\bm{\zeta}$.
    \STATE Compute stochastic gradient
    \[
    \bm{g}_t = \nabla_{\bm{\phi}}
    \frac{1}{|B_t^c|}
    \sum_{(\bm{x},y)\in B_t^c}
    \mathbb{E}_{\bm{\zeta}}
    \big[
    \ell_\tau(\bm{\phi}_t;(\bm{x},y),\bm{\zeta})
    \big].
    \]
    \STATE Update $\bm{\phi}_{t+1}\leftarrow \bm{\phi}_t-\eta_t\bm{g}_t$.
\ENDFOR
\STATE \textbf{return} $\widehat{\bm{\phi}} = \bm{\phi}_{T+1}$.
\end{algorithmic}
\end{algorithm}

\begin{algorithm}[ht]
\caption{NPPR Evaluation}
\label{alg:estimate_nppr}
\begin{algorithmic}[1]
\REQUIRE
Frozen classifier $h$, feature extractor $f$,
test set $S_{\mathrm{test}}$,
trained model $\mathrm{MLP}_{\widehat{\bm{\phi}}}$,
number of samples $M$.
\ENSURE
Estimated NPPR value $\widehat{\mathcal{G}}_{\mathrm{NPPR}}$.
\STATE Filter correctly classified test samples:
\[
S_{\mathrm{test}}^c = \{(\bm{x},y)\in S_{\mathrm{test}} : \arg\max h(\bm{x}) = y\}.
\]
\STATE Compute
\begin{align*}
\widehat{\mathcal{G}}_{\mathrm{NPPR}} = \frac{1}{|S_{\mathrm{test}}^c|\,M}
\sum_{(\bm{x},y)\in S_{\mathrm{test}}^c} \sum_{j=1}^M
\mathbf{1}_{ h\left(\bm{x}+\mathcal{U}(\bm{\varepsilon}_{j})\right)=y},
\end{align*}
where each $\bm{\varepsilon}_{j}$ is sampled from the GMM
parameterized by
$(\widehat{\bm{\mu}},\widehat{\Sigma},\widehat{\bm{\pi}})
=\mathrm{MLP}_{\widehat{\bm{\phi}}}(f(\bm{x}))$.
\end{algorithmic}
\end{algorithm}

Algorithm~1 trains the NPPR distribution estimator using correctly classified training samples under a frozen classifier. $\ell_{\tau}(\cdot)$ represents the loss $\varphi(\cdot)$ after reparameterization and auxiliary noise $\zeta$ denotes a collection of noise from standard Gaussian and Gumbel distributions (c.f. Thm.~\ref{app_thm:sgd_nonconvex_final} in Appendix~\ref{app_sec:converge_analysis}). 

Algorithm~2 then estimates NPPR on test set via Monte Carlo sampling from the learned perturbation distributions, with no parameter updates during evaluation. During training, each SGD iteration processes a mini-batch of size $b$ and draws $M$ perturbation samples per input, resulting in a per-iteration complexity of $\mathcal{O}(bM \cdot C_h)$, where $C_h$ denotes the cost of a forward/backward pass through the frozen classifier and the lightweight MLP head. At inference time, NPPR evaluation requires $\mathcal{O}(|S_{\mathrm{test}}^c|\, M \cdot C_h)$ forward passes, scaling linearly with the number of Monte Carlo samples. We report the detailed wall-clock time costs for both training and inference in Appendix~\ref{app_sec:time_cost}.

\subsection{Convergence of the Relaxed Objective}
\begin{theorem}[Nonconvex SGD convergence]
\label{thm:sgd_nonconvex_main}
Let $L_{\tau,S}(\bm{\phi})$ be an empirical objective and assume it is $\beta$-smooth and bounded below by $L_{\tau,S}^\star \triangleq \inf_{\bm{\phi}} L_{\tau,S}(\bm{\phi})$. Consider the SGD iterates
$\bm{\phi}_{t+1} = \bm{\phi}_t - \eta\,\bm{g}_t$,
where $\eta \le 1/\beta$ and $\bm{g}_t$ is a stochastic gradient estimator satisfying
\begin{align}
&\mathbb{E}\left[\bm{g}_t \mid \bm{\phi}_t\right] = \nabla L_{\tau,S}(\bm{\phi}_t), \\
&\mathbb{E}\left[\|\bm{g}_t-\nabla L_{\tau,S}(\bm{\phi}_t)\|^2 \mid \bm{\phi}_t\right] \leq \sigma^2
\end{align}
for some $\sigma^2<\infty$. Then for any $T\ge 1$,
\begin{align}
&\frac{1}{T}\sum_{t=0}^{T-1} \mathbb{E}\Big[\|\nabla L_{\tau,S}(\bm{\phi}_t)\|^2\Big] \\
&=\mathcal{O}\left( \frac{L_{\tau,S}(\bm{\phi}_0)-L_{\tau,S}^\star}{\eta T} + \beta\eta\sigma^2 \right),
\end{align}
where $\mathcal{O}(\cdot)$ hides universal numerical constants.
\end{theorem}
In our estimator, the variance $\sigma^2$ further decomposes into a Monte-Carlo term and a mini-batch term (c.f. Thm~\ref{app_thm:sgd_nonconvex_final} Appendix~\ref{app_sec:proof}). A theorem with proof is provided in Thm.~\ref{app_thm:sgd_nonconvex_final}.

Theorem~\ref{thm:sgd_nonconvex_main} establishes that optimizing the Gumbel--Softmax relaxed objective $L_{\tau,S}$ via SGD enjoys standard convergence guarantees for smooth nonconvex optimization. In particular, the result shows that the expected squared gradient norm converges at the canonical $\mathcal{O}(1/T)$ rate up to a variance-dependent neighborhood, despite the use of Monte Carlo sampling and stochastic mini-batches.

\section{Experiments}
\begin{table*}[t]
\centering
\caption{\textbf{Performance across datasets and models (\%).} Results are reported by taking average with standard deviations in parentheses over 10 runs. $\widehat{\mathcal{G}}_{\mathrm{NPPR}}$ corresponds to the GMM-based estimator. PR is evaluated under Gaussian, Uniform, and Laplace perturbation distributions, while AR is evaluated using PGD, CW for 3 steps with $\alpha = 0.25\times \gamma$, and standard AutoAttack (AA). Results are obtained under joint dependency with 7 mixture modes and an $L_{\infty}$ perturbation radius of 16/255. All robustness results are clean-normalized.}
\label{tab:cross_dataset_models}
\renewcommand{\arraystretch}{0.95}
\setlength{\tabcolsep}{3.2pt}
\resizebox{\textwidth}{!}{%
\begin{tabular}{l|lcccccccc}
\toprule
\textbf{Dataset} & \textbf{Model} & \textbf{Acc.} & $\widehat{\mathcal{G}}_{\mathrm{NPPR}}$(Ours) & $\widehat{\mathcal{G}}_{\mathrm{PR}_{\text{Gaussian}}}$ & $\widehat{\mathcal{G}}_{\mathrm{PR}_{\text{Uniform}}}$ & $\widehat{\mathcal{G}}_{\mathrm{PR}_{\text{Laplace}}}$ & $\widehat{\mathcal{G}}_{\mathrm{AR}_{\text{PGD}}}$ & $\widehat{\mathcal{G}}_{\mathrm{AR}_{\text{CW}}}$ & $\widehat{\mathcal{G}}_{\mathrm{AR}_{\text{AA}}}$ \\
\midrule
\multirow{4}{*}{CIFAR10}
& ResNet18 & 87.72 & 76.29{\scriptsize(0.038)} & 95.28{\scriptsize(0.030)} & 97.21{\scriptsize(0.034)} & 95.13{\scriptsize(0.026)} & 3.10{\scriptsize(0.084)} & 3.30{\scriptsize(0.115)} & 0.00{\scriptsize(0.000)} \\
& ResNet50 & 90.75 & 86.84{\scriptsize(0.024)} & 92.42{\scriptsize(0.021)} & 94.94{\scriptsize(0.020)} & 92.23{\scriptsize(0.051)} & 5.80{\scriptsize(0.116)} & 6.05{\scriptsize(0.128)} & 0.00{\scriptsize(0.000)} \\
& VGG16 & 92.28 & 74.53{\scriptsize(0.027)} & 89.08{\scriptsize(0.033)} & 93.36{\scriptsize(0.024)} & 88.76{\scriptsize(0.046)} & 0.49{\scriptsize(0.019)} & 0.31{\scriptsize(0.013)} & 0.00{\scriptsize(0.000)} \\
& WRN50 & 91.13 & 81.32{\scriptsize(0.030)} & 94.20{\scriptsize(0.051)} & 96.25{\scriptsize(0.031)} & 94.02{\scriptsize(0.020)} & 6.94{\scriptsize(0.182)} & 7.24{\scriptsize(0.130)} & 0.01{\scriptsize(0.000)} \\
\midrule
\multirow{4}{*}{CIFAR100}
& ResNet18 & 63.38 & 58.65{\scriptsize(0.018)} & 86.51{\scriptsize(0.052)} & 91.48{\scriptsize(0.039)} & 86.17{\scriptsize(0.042)} & 2.45{\scriptsize(0.049)} & 3.02{\scriptsize(0.027)} & 0.02{\scriptsize(0.000)} \\
& ResNet50 & 70.57 & 58.93{\scriptsize(0.035)} & 80.73{\scriptsize(0.047)} & 86.78{\scriptsize(0.052)} & 80.31{\scriptsize(0.034)} & 3.47{\scriptsize(0.079)} & 3.69{\scriptsize(0.046)} & 0.01{\scriptsize(0.000)} \\
& VGG16 & 71.02 & 52.13{\scriptsize(0.015)} & 75.86{\scriptsize(0.051)} & 83.48{\scriptsize(0.018)} & 75.44{\scriptsize(0.067)} & 0.98{\scriptsize(0.051)} & 0.81{\scriptsize(0.030)} & 0.00{\scriptsize(0.000)} \\
& WRN50 & 72.02 & 58.88{\scriptsize(0.032)} & 82.88{\scriptsize(0.060)} & 88.24{\scriptsize(0.034)} & 82.58{\scriptsize(0.083)} & 4.56{\scriptsize(0.080)} & 4.94{\scriptsize(0.101)} & 0.01{\scriptsize(0.000)} \\
\midrule
\multirow{4}{*}{TinyImageNet}
& ResNet18 & 56.99 & 53.20{\scriptsize(0.026)} & 92.26{\scriptsize(0.030)} & 95.17{\scriptsize(0.025)} & 92.07{\scriptsize(0.030)} & 0.45{\scriptsize(0.023)} & 0.52{\scriptsize(0.021)} & 0.00{\scriptsize(0.000)} \\
& ResNet50 & 70.39 & 64.46{\scriptsize(0.044)} & 92.09{\scriptsize(0.047)} & 95.19{\scriptsize(0.030)} & 91.90{\scriptsize(0.025)} & 3.90{\scriptsize(0.044)} & 2.12{\scriptsize(0.055)} & 0.00{\scriptsize(0.000)} \\
& VGG16 & 66.95 & 59.01{\scriptsize(0.033)} & 93.07{\scriptsize(0.019)} & 95.70{\scriptsize(0.040)} & 92.87{\scriptsize(0.047)} & 4.12{\scriptsize(0.034)} & 0.21{\scriptsize(0.014)} & 0.00{\scriptsize(0.000)} \\
& WRN50 & 73.74 & 59.88{\scriptsize(0.019)} & 93.60{\scriptsize(0.013)} & 96.19{\scriptsize(0.019)} & 93.43{\scriptsize(0.021)} & 4.79{\scriptsize(0.039)} & 3.18{\scriptsize(0.082)} & 0.00{\scriptsize(0.000)} \\
\bottomrule
\end{tabular}%
}
\end{table*}

\begin{table*}[t]
\caption{\textbf{Ablation settings of ResNet18 on CIFAR-10 under various perturbation dependency settings (\%).} $\widehat{\mathcal{G}}_{\mathrm{NPPR}}$ denotes the estimator of the global NPPR metric (c.f. Def.~\ref{def:NPPR} \&~\ref{def:global_metic}), while ER represents the entropy ratio of the mixture weights. A smaller ER (c.f. Appendix~\ref{app_sec:ER}) implies that the GMM is governed by one dominant component. The lowest value of $\widehat{\mathcal{G}}_{\mathrm{NPPR}}$ is highlighted in bold.}
\label{tab:ablation}
\centering
\begin{tabular}{lcc|cc|cc|cc}
\toprule
\multirow{2}{*}{\textbf{Config.}} &
\multicolumn{2}{c|}{Indep.} &
\multicolumn{2}{|c|}{Label dep.} &
\multicolumn{2}{|c|}{Input Dep.} &
\multicolumn{2}{|c}{Joint Dep.} \\
\cmidrule{2-3}
\cmidrule{4-5}
\cmidrule{6-7}
\cmidrule{8-9}
& $\widehat{\mathcal{G}}_{\mathrm{NPPR}}$ & ER($\bm{\pi}$) 
& $\widehat{\mathcal{G}}_{\mathrm{NPPR}}$ & ER($\bm{\pi}$) 
& $\widehat{\mathcal{G}}_{\mathrm{NPPR}}$ & ER($\bm{\pi}$) 
& $\widehat{\mathcal{G}}_{\mathrm{NPPR}}$ & ER($\bm{\pi}$) \\
\midrule

Base (K=3) 
& 74.99 & 0.21 & 75.79 & 99.22 & 85.36 & 57.00 & 79.20 & 99.23 \\

+ Learnable 
& 81.27 & 91.35 & 69.08 & 99.14 & 84.76 & 69.45 & 78.59 & 99.16 \\

+ \#mode (K=7) 
& \textbf{73.25} & 94.37 & 64.88 & 87.29 & 84.17 & 54.28 & 76.27 & 89.63 \\

+ \#mode (K=12) 
& 74.43 & 92.43 & \textbf{64.87} & 76.59 & \textbf{83.31} & 48.74 & \textbf{73.11} & 73.31 \\

\bottomrule
\end{tabular}
\end{table*}

\begin{table}[t]
\centering
\caption{\textbf{Comparison of AR, PR, and NPPR on CIFAR-10 (ResNet18) under different $L_\infty$ budgets.} Results are reported for $\gamma \in \{4/255, 8/255, 16/255\}$. PR uses a uniform perturbation distribution, AR is evaluated with PGD-10, and NPPR is reported under different dependency settings (K=7).}
\label{tab:compare_diff_radii}
\begin{tabular}{lcccc}
\toprule
\textbf{Estimator} & \multicolumn{1}{c}{\textbf{Dependency}} & \textbf{4/255} & \textbf{8/255} & \textbf{16/255} \\
\midrule
\multirow{4}{*}{$\widehat{\mathcal{G}}_{\mathrm{NPPR}}$}
& Indep.    & 97.51 & 91.10 & 73.25 \\
& Label dep. & 95.57 & 86.51 & 64.88 \\
& Input dep. & 98.67 & 95.01 & 84.17 \\
& Joint dep. & 96.77 & 90.98 & 76.27 \\
\midrule
$\widehat{\mathcal{G}}_{\mathrm{PR}_{\text{Uniform}}}$ & & 99.71 & 99.30 & 97.26 \\
\midrule
$\widehat{\mathcal{G}}_{\mathrm{AR}_{\text{PGD}}}$ & & 34.91 & 6.33 & 0.05 \\
\bottomrule
\end{tabular}
\end{table}

\begin{table}[t]
\centering
\caption{\textbf{NPPR (\%) on ResNet18/CIFAR-10 trained and evaluated under different $L_\infty$ radii.} Each row corresponds to a fixed training radius, while each column reports the evaluation result under a different radius, under joint dependency with $K=7$.}
\label{tab:cross_radius_train_eval}
\begin{tabular}{lcccc}
\toprule
\textbf{Train$\backslash$Eval} & \textbf{$4/255$} & \textbf{$8/255$} & \textbf{$16/255$} & \textbf{$32/255$} \\
\midrule
$4/255$  & 96.56 & 91.58 & 79.63 & 59.79 \\
$8/255$  & 96.77 & 91.31 & 77.66 & 55.60 \\
$16/255$ & 96.75 & 90.99 & 76.27 & 52.49 \\
\bottomrule
\end{tabular}
\end{table}

\paragraph{Experimental setup} 
We evaluate our method on CIFAR-10, CIFAR-100, and Tiny ImageNet using ResNet18/50, WideResNet50, and VGG16 as base classifiers. The perturbation distribution is modeled by a GMM with varying numbers of components ($K \in \{3,7,12\}$) and learned under four conditioning strategies: input-independent, label-dependent, input-dependent, and joint input--label dependence, which together capture a broad range of dependency structures. All evaluations are conducted under an $L_\infty$ perturbation budget with $\gamma \in \{4/255, 8/255, 16/255\}$. The distribution parameters are learned for 50 epochs using Adam with learning rate $5\times10^{-4}$. Experiments are implemented in Python~3.10 and PyTorch~2.5.1, and run on two NVIDIA RTX~3090 GPUs. Additional implementation and training details are provided in Appendix~\ref{app:exp_setting}.

% We conduct experiments on CIFAR-10, CIFAR-100, and Tiny ImageNet using ResNet18/50, WideResNet50, and VGG16 as the base classifiers. Our GMM-based perturbation generator is optimized with different numbers of mixture components ($K \in \{3,7,12\}$) under four conditioning strategies: independent, label-dependent, input-dependent, and jointly dependent. We evaluate performance under $L_\infty$-norm perturbation budget with radii $\varepsilon \in \{4/255,, 8/255,, 16/255\}$. The model is optimized for 50 epochs with Adam ($\text{lr}=5\times10^{-4}$). All experiments are implemented in Python~3.10 and PyTorch~2.5.1 on two NVIDIA RTX~3090 GPUs. Additional experimental details are provided in Appendix~\ref{app:exp_setting}. 

\subsection{Ablation Study}
Tab.~\ref{tab:ablation} reports an ablation study of the proposed training pipeline on CIFAR-10 with ResNet18. Except for the input-independent setting, all variants yield more conservative robustness estimates, as reflected by lower values of $\widehat{\mathcal{G}}_{\mathrm{NPPR}}$, when a learnable up-sampler is employed. Moreover, increasing the number of mixture components generally leads to further reductions in $\widehat{\mathcal{G}}_{\mathrm{NPPR}}$, indicating that a richer mixture improves the expressiveness of the learned perturbation distribution. The input-independent variant exhibits a different trend, which we attribute to optimization instability arising from the lack of conditioning information, leading to higher variance in the learned distribution.

\subsection{Evaluation Across Models and Datasets}

Tab.~\ref{tab:cross_dataset_models} further compares global NPPR, PR (under Gaussian and uniform perturbation distributions), AR (estimated using standard first-order attacks as reference), and clean accuracy across different model architectures and datasets.
As expected, AR yields the most pessimistic robustness estimates, while PR under fixed perturbation distributions produces the most optimistic ones.
Notably, NPPR consistently lies between AR and PR across all configurations, empirically validating Proposition~\ref{prop:ar_pr_nppr} and demonstrating that NPPR provides a more conservative yet stable robustness evaluation. In addition, NPPR degrades as the classification task becomes more challenging for a fixed architecture, e.g., $\widehat{\mathcal{G}}_{\mathrm{NPPR}}$ for ResNet18 decreases from $76.29$ on CIFAR-10 to $58.65$ on CIFAR-100 and $53.20$ on TinyImageNet. We also compute the wall time for our NPPR estimator in Tab.~\ref{tab:inference_time}. Across different datasets, we find that input dimensionality and model complexity are the primary factors affecting efficiency.  

To further assess this efficiency aspect, Tab.~\ref{tab:feature_transfer} evaluates NPPR with different feature extractors for the GMM head, suggesting that NPPR does not necessarily require the same or largest backbone as the target classifier. Smaller or different feature extractors can still provide competitive estimates, offering a practical way to reduce the additional fitting cost.

\subsection{Cross-Radius Analysis}
\label{sec:cross_radius}
Tab.~\ref{tab:compare_diff_radii} reports NPPR, PR, and AR under different perturbation radii. Across all settings, smaller perturbation budgets lead to higher robustness estimates, reflecting fewer admissible AEs within a more constrained region.

Tab.~\ref{tab:cross_radius_train_eval} reports NPPR scores on ResNet18/CIFAR-10 when the estimator is trained at different radii and evaluated across multiple radii. Two consistent patterns can be observed. First, for a fixed training radius, the NPPR score decreases monotonically as the evaluation radius increases, which matches the expected behavior under a larger perturbation budget. Second, estimators trained at different radii remain reasonably stable across evaluation radii. In particular, training with a larger radius tends to yield slightly more conservative estimates at larger evaluation radii. For example, when evaluated at $32/255$, training at $16/255$ gives an NPPR score of $52.49$, compared with $59.79$ when training at $4/255$. More cross-radius experimental results are provided in Appendix~\ref{app_sec:cross-radius}.

\begin{table}[t]
\centering
\caption{\textbf{NPPR (\%) on CIFAR-10 under $L_\infty(16/255)$ using different feature extractors for the GMM head.} Rows denote the feature extractor used to train the estimator, while columns denote the target classifier being evaluated. Results are obtained under the joint dependency setting with $K=7$.}
\label{tab:feature_transfer}
\resizebox{0.98\columnwidth}{!}{
\begin{tabular}{lcccc}
\toprule
\textbf{Feat.} & ResNet18 & ResNet50 & VGG16 & WRN50 \\
\midrule
ResNet18 & 76.27 & 80.12 & 74.99 & 77.26 \\
ResNet50 & 86.41 & 86.81 & 86.03 & 86.09 \\
VGG16    & 76.60 & 80.66 & 74.47 & 77.84 \\
WRN50 & 82.40 & 84.21 & 79.79 & 81.32 \\
\bottomrule
\end{tabular}
}
\end{table}

\section{Conclusion}
To address the limitations of PR evaluation caused by reliance on \textit{predefined} perturbation distributions, we introduce \emph{Non-Parametric Probabilistic Robustness} (NPPR). NPPR provides a \textit{conservative} robustness evaluation by estimating PR under an optimized perturbation distribution (from data) that minimizes PR over all admissible distributions, eliminating the need to assume a fixed perturbation distribution a priori. 
We solve the NPPR estimation problem via a GMM-based implementation and analyze how different dependency structures between inputs and perturbations affect robustness estimates, observing that stronger dependency consistently leads to more conservative NPPR values. Experiments across multiple datasets and model architectures demonstrate that our NPPR approach is both effective and efficient.
%effectively learns a conservative perturbation distribution, offering a more practical solution given that the true perturbation distribution underlying PR can never be known in reality. 
We further provide theoretical analysis clarifying the relationships among AR, PR, and NPPR. 

We note that NPPR \textit{is not intended to design new attacks} (as other common robustness studies). Instead, NPPR serves as a conservative robustness estimator from the \textit{ assessor’s} perspective, aiming to answer the question: ``\textit{if the input is subject to some stochastic perturbations (e.g., arising from background noise, sensor vibrations, or even unsophisticated random attacks), and the exact perturbation distribution is unknown, how robust can the model be at least?}''. Thus, NPPR does not replace AR or PR, rather, it complements them by filling the gap created by unknown perturbation distributions when doing assessment. We believe NPPR represents an important stepping stone toward making PR more implementable in real-world applications.

\section*{Acknowledgements}
SK and XZ have received funding from the European Union's EU Framework Program for Research and Innovation Europe Horizon (grant agreement No 101202457). ZW's contribution is supported by the EU funded SYNERGIES project (grant agreement No 101146542). XZ's contribution is also supported by the UK EPSRC New Investigator Award [EP/Z536568/1]. SK's contribution is supported by the UKRI Future Leaders Fellowship Grant [MR/S035176/1]. YZ’s contribution is supported by China Scholarship Council. 
 
Views and opinions expressed are those of the authors only and do not necessarily reflect those of the European Union or European Research Executive Agency (REA). Neither the European Union nor the granting authority can be held responsible for them.

\section*{Impact Statement}

This paper presents work whose goal is to advance the field of machine learning, particularly in robustness evaluation. There are many potential societal consequences of our work, none of which we feel must be specifically highlighted here.

% In the unusual situation where you want a paper to appear in the
% references without citing it in the main text, use \nocite
% \nocite{langley00}

\bibliography{ref_nppr}
\bibliographystyle{icml2026}

%%%%%%%%%%%%%%%%%%%%%%%%%%%%%%%%%%%%%%%%%%%%%%%%%%%%%%%%%%%%%%%%%%%%%%%%%%%%%%%
%%%%%%%%%%%%%%%%%%%%%%%%%%%%%%%%%%%%%%%%%%%%%%%%%%%%%%%%%%%%%%%%%%%%%%%%%%%%%%%
% APPENDIX
%%%%%%%%%%%%%%%%%%%%%%%%%%%%%%%%%%%%%%%%%%%%%%%%%%%%%%%%%%%%%%%%%%%%%%%%%%%%%%%
%%%%%%%%%%%%%%%%%%%%%%%%%%%%%%%%%%%%%%%%%%%%%%%%%%%%%%%%%%%%%%%%%%%%%%%%%%%%%%%
\newpage
\appendix
% \onecolumn

\section{Omitted Proofs}
\label{app_sec:proof}

\subsection{Relationships among AR, PR and NPPR}
\label{app_sec:props}
To facilitate readability, we restate Propositions \ref{prop:ar_pr_nppr} and \ref{prop:cond_uncond}, followed by their proofs in order.

\begin{proposition}
Considering AR, PR, and NPPR as defined in Def.~\ref{def:ar}, \ref{def:pr}, and~\ref{def:NPPR}, and binary loss function for AR, let~$\mathcal{G}_{\mathrm{AR}}$, $\mathcal{G}_{\mathrm{PR}}$, and~$\mathcal{G}_{\mathrm{NPPR}}$ denote their corresponding global robustness metrics. Given a perturbation distribution $\omega\in P_{\bm{\varepsilon}}$ (either conditional or unconditional) for the perturbation $\bm{\varepsilon}$, we have 
\begin{align}
    \mathcal{G}_{\mathrm{AR}} \leq \mathcal{G}_{\mathrm{NPPR}} \leq \mathcal{G}_{\mathrm{PR}}.    
\end{align}
If we allow $P_{\bm{\varepsilon}}$ to be unrestricted, representing any family of distributions (including the Dirac delta measure), then the equality holds,
\begin{align}
    \mathcal{G}_{\mathrm{AR}} = \mathcal{G}_{\mathrm{NPPR}}.    
\end{align}
If $P_{\bm{\varepsilon}}$ includes only continuous distributions and the set of all adversarial perturbations is set of measure zero within $\mathcal{B}$, then the following strict inequality holds,   
\begin{align}
    \mathcal{G}_{\mathrm{AR}} < \mathcal{G}_{\mathrm{NPPR}}.    
\end{align}
\end{proposition}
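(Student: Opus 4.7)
The plan is to prove each of the three claims pointwise for the local metrics $\mathfrak{S}$ and then lift to the global metrics by monotonicity of $\mathbb{E}_{(\bm{x},y)\sim D}[\cdot]$. Under the binary loss, I would read $\mathfrak{S}_{\mathrm{AR}}(\bm{x},y)\in\{0,1\}$ as the indicator that the classifier is correct on all of $\mathcal{B}$, so that $\mathfrak{S}_{\mathrm{AR}}=\inf_{\bm{\varepsilon}\in\mathcal{B}}\mathbf{1}_{h(\bm{x}+\bm{\varepsilon})=y}$, while $\mathfrak{S}_{\mathrm{PR}}(\bm{x},y,\omega)$ and $\mathfrak{S}_{\mathrm{NPPR}}(\bm{x},y)$ are expected-correctness probabilities under an admissible $\omega$ and under the $P_{\bm{\varepsilon}}$-worst admissible $\omega$, respectively. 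This orientation makes all three quantities live on a common ``higher is more robust'' scale.

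For the chain $\mathcal{G}_{\mathrm{AR}}\leq\mathcal{G}_{\mathrm{NPPR}}\leq\mathcal{G}_{\mathrm{PR}}$, the key observation is that for any $\omega$ with $\operatorname{supp}(\omega)\subseteq\mathcal{B}$, the expectation $\mathbb{E}_{\bm{\varepsilon}\sim\omega}[\mathbf{1}_{h(\bm{x}+\bm{\varepsilon})=y}]$ is bounded below by $\inf_{\bm{\varepsilon}\in\mathcal{B}}\mathbf{1}_{h(\bm{x}+\bm{\varepsilon})=y}=\mathfrak{S}_{\mathrm{AR}}(\bm{x},y)$. Taking the infimum over $\omega\in P_{\bm{\varepsilon}}$ on the left gives $\mathfrak{S}_{\mathrm{AR}}\leq\mathfrak{S}_{\mathrm{NPPR}}$, while $\mathfrak{S}_{\mathrm{NPPR}}\leq\mathfrak{S}_{\mathrm{PR}}(\cdot,\cdot,\omega)$ for any specific admissible $\omega$ is immediate from the definition of an infimum. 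Integrating both inequalities against $D$ yields the global chain.

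For the equality when $P_{\bm{\varepsilon}}$ is unrestricted, I split pointwise by whether there exists $\bm{\varepsilon}^{*}\in\mathcal{B}$ with $h(\bm{x}+\bm{\varepsilon}^{*})\neq y$. If yes, picking $\omega=\delta_{\bm{\varepsilon}^{*}}$ (admissible by assumption) attains $\mathfrak{S}_{\mathrm{PR}}=0$, so $\mathfrak{S}_{\mathrm{NPPR}}=0=\mathfrak{S}_{\mathrm{AR}}$; if no, every admissible $\omega$ gives $\mathfrak{S}_{\mathrm{PR}}=1$ and $\mathfrak{S}_{\mathrm{AR}}=1$. Integration yields $\mathcal{G}_{\mathrm{AR}}=\mathcal{G}_{\mathrm{NPPR}}$. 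For the strict inequality under continuous $P_{\bm{\varepsilon}}$, I would let $A_{\bm{x},y}=\{\bm{\varepsilon}\in\mathcal{B}:h(\bm{x}+\bm{\varepsilon})\neq y\}$ and consider the two cases again. On inputs with $A_{\bm{x},y}=\emptyset$, both sides equal $1$; on inputs with a nonempty but Lebesgue-null $A_{\bm{x},y}$, absolute continuity of every admissible $\omega$ with respect to Lebesgue measure forces $\omega(A_{\bm{x},y})=0$, so $\mathfrak{S}_{\mathrm{PR}}(\omega)=1$ for all such $\omega$, giving $\mathfrak{S}_{\mathrm{NPPR}}=1>0=\mathfrak{S}_{\mathrm{AR}}$.

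The main obstacle is the strict global inequality in the third part: the pointwise strict gap must contribute with positive $D$-mass for $\mathcal{G}_{\mathrm{AR}}<\mathcal{G}_{\mathrm{NPPR}}$ to be non-vacuous, so I must read the hypothesis ``the set of all adversarial perturbations is of measure zero'' as asserting that a nontrivial but Lebesgue-null $A_{\bm{x},y}$ exists with positive probability under $D$; otherwise both metrics collapse to $1$ and the claim is empty. A secondary bookkeeping point is formalizing ``continuous distributions'' as absolutely continuous with respect to Lebesgue measure on $\mathcal{B}$, which is the precise hypothesis that licenses the measure-zero implication and the only place in the argument where the distributional restriction actually bites.
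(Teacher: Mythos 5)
Your proof is correct and rests on the same core mechanism as the paper's: the expectation of the correctness indicator under any admissible $\omega$ dominates its infimum over $\mathcal{B}$, the Dirac measure $\delta_{\bm{\varepsilon}^{\star}}$ witnesses the reverse inequality when $P_{\bm{\varepsilon}}$ is unrestricted, and absolute continuity forces $\omega(A_{\bm{x},y})=0$ when the adversarial set is Lebesgue-null. Two differences are worth noting. First, you argue pointwise and lift by monotonicity of $\mathbb{E}_{D}$, which cleanly covers the conditional case but silently drops the unconditional one: when $\omega$ is input-independent the relevant AR baseline is a \emph{universal} adversarial perturbation $\bm{\varepsilon}^{\star}\in\arg\sup_{\bm{\varepsilon}\in\mathcal{B}}\mathbb{E}_{D}[\mathbf{1}_{h(\bm{x}+\bm{\varepsilon})\neq y}]$, and the paper devotes half its proof to this branch, exchanging $\mathbb{E}_{D}$ and $\mathbb{E}_{\omega}$ and running the same sandwich at the level of $F(\bm{\varepsilon})=\mathbb{E}_{D}[\mathbf{1}_{h(\bm{x}+\bm{\varepsilon})=y}]$; your argument extends to that case essentially verbatim with $F$ in place of the pointwise indicator, but since the proposition explicitly claims both cases you should state the extension. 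Second, your treatment of the strict inequality is actually more careful than the paper's: you correctly identify that ``continuous'' must be read as absolutely continuous with respect to Lebesgue measure (an atomless law concentrated on a lower-dimensional set could still charge a Lebesgue-null $A_{\bm{x},y}$), and that the strict global gap is non-vacuous only if inputs with nonempty-but-null adversarial sets carry positive $D$-mass---a hypothesis the paper leaves implicit.
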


\begin{proof}
Here we provide the proof of Proposition~\ref{prop:ar_pr_nppr}. The inequality
$\mathcal{G}_{\mathrm{NPPR}} \leq \mathcal{G}_{\mathrm{PR}}$ follows directly from Definition~\ref{def:NPPR}.
Thus, it remains to establish the inequality between AR and NPPR, namely
$\mathcal{G}_{\mathrm{AR}} \leq \mathcal{G}_{\mathrm{NPPR}}$. We prove this inequality by showing that $\mathcal{G}_{\mathrm{AR}} = \mathcal{G}_{\mathrm{NPPR}}$ when no restrictions are imposed on $P_{\bm{\varepsilon}}$, and for some specific restrictions imposed on $P_{\bm{\varepsilon}}$, there exists $\mathcal{G}_{\mathrm{AR}} < \mathcal{G}_{\mathrm{NPPR}}$.

We begin with the equality. To this end, we first establish that $\mathcal{G}_{\mathrm{AR}} \geq \mathcal{G}_{\mathrm{NPPR}}$, and then show the reverse inequality $\mathcal{G}_{\mathrm{AR}} \leq \mathcal{G}_{\mathrm{NPPR}}$. 
Considering the conditional case and a binary loss function, let 
\begin{align}
\bm{\varepsilon}^{\star}\in\arg\sup_{\bm{\varepsilon}\in\mathcal{B}}{\mathbf{1}}_{h(\bm{x}+\bm{\varepsilon})\neq y}   
\end{align}
be the adversarial perturbation, and 
\begin{align}
\bm{\omega}^{\star}(\cdot\vert \bm{x}, y) = \arg\inf_{\omega\in P_{\bm{\varepsilon}}}\mathbb{E}_{\bm{\varepsilon} \sim \omega(\cdot \mid \bm{x}, y)}
\Big[\mathbf{1}_{h(\bm{x} + \bm{\varepsilon}) = y}\Big]   
\end{align}
be optimal perturbation distribution of NPPR. Now, considering Dirac delta measure $\delta_{\bm{\varepsilon}^{\star}} \in P_{\bm{\varepsilon}}$, we have 
\begin{align}
    \mathcal{G}_{\mathrm{AR}} &= \mathbb{E}_{D}\left[\mathbf{1}_{h(\bm{x}+\bm{\varepsilon}^{\star})= y} \right] \\
    &= \mathbb{E}_{D}\left[\mathbb{E}_{\delta_{\bm{\varepsilon}^{\star}}}\left[\mathbf{1}_{h(\bm{x}+\bm{\varepsilon})= y} \right]\right] \\ 
    &\geq \mathbb{E}_{D}\left[\inf_{\omega}\mathbb{E}_{\omega}\left[\mathbf{1}_{h(\bm{x}+\bm{\varepsilon})= y} \right]\right] \\ 
    &= \mathcal{G}_{\mathrm{NPPR}}
\end{align}
Inversely, we have 
\begin{align}
\mathcal{G}_{\mathrm{NPPR}} &= \mathbb{E}_{D}\left[\mathbb{E}_{{\omega}^{\star}}\left[\mathbf{1}_{h(\bm{x}+\bm{\varepsilon})= y} \right]\right],
\end{align}
If $\mathbb{E}_{{\omega}^{\star}}\left[\mathbf{1}_{h(\bm{x}+\bm{\varepsilon})= y} \right] < 1$, there must exsit at least one $\bm{\varepsilon}^{\star}\in\mathcal{B}$, such that $h(\bm{x}+\bm{\varepsilon}^{\star}) \neq y$, therefore $\mathbf{1}_{h(\bm{x}+\bm{\varepsilon}^{\star})= y} = 0$, hence 

\begin{align}
\mathbb{E}_{D}\left[\mathbf{1}_{h(\bm{x}+\bm{\varepsilon}^{\star})= y} \right] \leq \mathbb{E}_{D}\left[\mathbb{E}_{{\omega}^{\star}}\left[\mathbf{1}_{h(\bm{x}+\bm{\varepsilon})= y} \right]\right]. 
\end{align}
Therefore, if we do not restrict $P_{\bm{\varepsilon}}$, we have $\mathcal{G}_{\mathrm{NPPR}} = \mathcal{G}_{\mathrm{AR}}$. In case of $P_{\bm{\varepsilon}}$ represents continuous distributions, and there only exists distinct AEs, $\bm{\varepsilon}^{\star}$, such that $\mathbf{1}_{h(\bm{x}+\bm{\varepsilon}^{\star}) = y} = 0$ with probability of zero, then $\forall \omega\in P_{\bm{\varepsilon}}$ we have $\mathbb{E}_{\omega}[\mathbf{1}_{h(\bm{x}+\bm{\varepsilon}) = y}] = 1$. Hence, we have 

\begin{align}
\mathbb{E}_{D}\left[\mathbf{1}_{h(\bm{x}+\bm{\varepsilon}^{\star})= y} \right] < \mathbb{E}_{D}\left[\mathbb{E}_{{\omega}^{\star}}\left[\mathbf{1}_{h(\bm{x}+\bm{\varepsilon})= y} \right]\right]. 
\end{align}
In the unconditional case, instead of the usual input-dependent PGD, we consider Universal Adversarial Attacks (UAEs)~\cite{chaubey2020universal}. The following part will show that the usual input-dependent attack yields a lower value than that of the UAE. Let 

\begin{align}
\bm{\omega}^{\star} &= \arg\inf_{\omega\in P_{\bm{\varepsilon}}}\mathbb{E}_{D}\left[\mathbb{E}_{\bm{\varepsilon} \sim \omega}
\left[\mathbf{1}_{h(\bm{x} + \bm{\varepsilon}) = y}\right]\right] \\ 
&= \arg\inf_{\omega\in P_{\bm{\varepsilon}}}\mathbb{E}_{\bm{\varepsilon} \sim \omega}\left[\mathbb{E}_{D}
\left[\mathbf{1}_{h(\bm{x} + \bm{\varepsilon}) = y}\right]\right]
\end{align}
The two expectations is exchangeable since $\bm{\omega}$ is independent of input-label pairs. Let

\begin{align}
\bm{\varepsilon}^{\star}\in\arg\sup_{\bm{\varepsilon}\in\mathcal{B}}\mathbb{E}_{D}\left[\mathbf{1}_{h(\bm{x}+\bm{\varepsilon})\neq y}\right]   
\end{align}
be an UAE. Similarly, we have 

\begin{align}
    \mathcal{G}_{\mathrm{AR}} &= \mathbb{E}_{D}\left[\mathbf{1}_{h(\bm{x}+\bm{\varepsilon}^{\star})= y} \right] \\
    &= \mathbb{E}_{\delta_{\bm{\varepsilon}^{\star}}}\left[\mathbb{E}_{D}\left[\mathbf{1}_{h(\bm{x}+\bm{\varepsilon})= y} \right]\right] \\ 
    &\geq \inf_{\omega}\mathbb{E}_{\omega}\left[\mathbb{E}_{D}\left[\mathbf{1}_{h(\bm{x}+\bm{\varepsilon})= y} \right]\right] \\ 
    &= \mathcal{G}_{\mathrm{NPPR}}
\end{align}
Now, we show that $\mathcal{G}_{\mathrm{AR}} \leq \mathcal{G}_{\mathrm{NPPR}}$. We have 

\begin{align}
\mathcal{G}_{\mathrm{NPPR}} &= \mathbb{E}_{D}\left[\mathbb{E}_{{\omega}^{\star}}\left[\mathbf{1}_{h(\bm{x}+\bm{\varepsilon})= y} \right]\right] \\ 
&= \mathbb{E}_{\omega^{\star}}\left[\mathbb{E}_{D}\left[\mathbf{1}_{h(\bm{x}+\bm{\varepsilon})= y} \right]\right].
\end{align}
where $\omega^{\star}$ is the optimal distribution derived from NPPR, and for any distribution $\omega$, we have 
\begin{align}
\mathbb{E}_{\omega}\left[\mathbb{E}_{D}\left[\mathbf{1}_{h(\bm{x}+\bm{\varepsilon})= y} \right]\right] \geq \inf_{\bm{\varepsilon}}\mathbb{E}_{D}\left[\mathbf{1}_{h(\bm{x}+\bm{\varepsilon})= y}\right]. 
\end{align}
Hence we have $\mathcal{G}_{\mathrm{NPPR}} \geq \mathcal{G}_{\mathrm{AR}}$. The proof of inequality $\mathcal{G}_{\mathrm{NPPR}} > \mathcal{G}_{\mathrm{AR}}$ is the same as the conditional case. 
\end{proof}

Now, we prove the Prop.~\ref{prop:cond_uncond}. 
\begin{proposition}
Reuse the condition in Prop.~\ref{prop:ar_pr_nppr}, and let $\mathcal{G}^{\mathrm{c}}$ and $\mathcal{G}^{\mathrm{u}}$ denote global robustness on conditional and unconditional perturbation distributions for AR, and NPPR, respectively. Then we have 
\begin{align}
    \mathcal{G}^{\mathrm{c}} \leq \mathcal{G}^{\mathrm{u}}.
\end{align}
\end{proposition}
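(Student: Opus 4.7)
The plan is to prove both cases (AR and NPPR) in a unified way by recognizing that the conditional-vs-unconditional gap is an elementary instance of the inequality $\mathbb{E}[\inf_{a} f(a,\cdot)] \leq \inf_{a} \mathbb{E}[f(a,\cdot)]$, combined with the observation that an unconditional perturbation (or perturbation distribution) is a special case of a conditional one that ignores $(\bm{x}, y)$. First I would make the distinction between the two formulations explicit: in the conditional case the minimizing object is chosen after observing $(\bm{x}, y)$, so the infimum sits inside the outer $\mathbb{E}_{D}$, whereas in the unconditional case a single $\bm{\varepsilon}$ (or a single $\omega$) must serve every pair, so the infimum sits outside $\mathbb{E}_{D}$. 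This is the same universal-perturbation viewpoint already invoked in the proof of Prop.~\ref{prop:ar_pr_nppr}.

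For the AR case with binary loss I would write
\begin{align*}
\mathcal{G}_{\mathrm{AR}}^{\mathrm{c}} &= \mathbb{E}_{D}\!\left[\inf_{\bm{\varepsilon}\in\mathcal{B}} \mathbf{1}_{h(\bm{x}+\bm{\varepsilon})=y}\right], \\
\mathcal{G}_{\mathrm{AR}}^{\mathrm{u}} &= \inf_{\bm{\varepsilon}\in\mathcal{B}} \mathbb{E}_{D}\!\left[\mathbf{1}_{h(\bm{x}+\bm{\varepsilon})=y}\right].
\end{align*}
For any fixed $\bm{\varepsilon}_{0}\in\mathcal{B}$, we have the pointwise bound $\inf_{\bm{\varepsilon}\in\mathcal{B}} \mathbf{1}_{h(\bm{x}+\bm{\varepsilon})=y} \leq \mathbf{1}_{h(\bm{x}+\bm{\varepsilon}_{0})=y}$; taking $\mathbb{E}_{D}$ on both sides preserves the inequality, and then taking the infimum over $\bm{\varepsilon}_{0}$ on the right-hand side gives $\mathcal{G}_{\mathrm{AR}}^{\mathrm{c}} \leq \mathcal{G}_{\mathrm{AR}}^{\mathrm{u}}$.

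The NPPR case is mechanically identical after relabelling. Setting $F(\omega,\bm{x},y) := \mathbb{E}_{\bm{\varepsilon}\sim\omega}[\mathbf{1}_{h(\bm{x}+\bm{\varepsilon})=y}]$, we have $\mathcal{G}_{\mathrm{NPPR}}^{\mathrm{c}} = \mathbb{E}_{D}[\inf_{\omega(\cdot\mid\bm{x},y)} F(\omega,\bm{x},y)]$ and $\mathcal{G}_{\mathrm{NPPR}}^{\mathrm{u}} = \inf_{\omega\in P_{\bm{\varepsilon}}} \mathbb{E}_{D}[F(\omega,\bm{x},y)]$. For any unconditional $\omega_{0}\in P_{\bm{\varepsilon}}$, viewed as a constant conditional kernel, we have pointwise $\inf_{\omega(\cdot\mid\bm{x},y)} F(\omega,\bm{x},y) \leq F(\omega_{0},\bm{x},y)$, and the same two steps (apply $\mathbb{E}_{D}$, then take $\inf_{\omega_{0}}$) deliver the claim.

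The main obstacle is not analytical but notational: one has to be explicit that in the unconditional setup the same object is shared across all $(\bm{x}, y)$ pairs, which is exactly why the infimum moves outside $\mathbb{E}_{D}$, and that the conditional family contains constant kernels so the right-hand infimum ranges over a subset of the feasible set used on the left. Once these conventions are fixed, the argument reduces to one application of $\mathbb{E}[\inf_{a} f(a,\cdot)] \leq \inf_{a} \mathbb{E}[f(a,\cdot)]$, with no measurability issues since $\mathbf{1}_{h(\bm{x}+\bm{\varepsilon})=y}$ is bounded and $\mathcal{B}$ is fixed.
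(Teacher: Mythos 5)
Your proposal is correct and follows essentially the same route as the paper's proof: both reduce the claim to the elementary exchange inequality $\mathbb{E}_{D}[\inf_{a} f(a,\cdot)] \leq \inf_{a}\mathbb{E}_{D}[f(a,\cdot)]$, established by fixing an arbitrary $\bm{\varepsilon}_0$ (resp.\ unconditional $\omega_0$), taking the expectation of the pointwise bound, and then taking the infimum on the right. Your added remarks on why the infimum sits inside versus outside $\mathbb{E}_{D}$, and on constant kernels being a special case of conditional ones, only make explicit what the paper leaves implicit.
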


\begin{proof}
We first prove the AR case and consider the UAE as the unconditional case of AR. We have 
\begin{align}
\mathcal{G}^{\mathrm{c}}_{\mathrm{AR}} &= 1 - \mathbb{E}_{D}\left[\sup_{\bm{\varepsilon}}\mathbf{1}_{h(\bm{x} + \bm{\varepsilon})\neq y} \right]\\
&= \mathbb{E}_{D}\left[\inf_{\bm{\varepsilon}}\mathbf{1}_{h(\bm{x} + \bm{\varepsilon})= y} \right].
\end{align}
Since $\forall \bm{\varepsilon}_0\in\mathcal{B}$ and $(\bm{x}, y) \in \mathcal{X}\times\mathcal{Y}, \inf_{\bm{\varepsilon}}\mathbf{1}_{h(\bm{x} + \bm{\varepsilon})= y} \leq \mathbf{1}_{h(\bm{x} + \bm{\varepsilon}_0)= y}$, hence $\forall \bm{\varepsilon}_0\in\mathcal{B}$ 
\begin{align}
\mathbb{E}_{D}\left[\inf_{\bm{\varepsilon}}\mathbf{1}_{h(\bm{x} + \bm{\varepsilon})= y} \right] \leq \mathbb{E}_{D}\left[\mathbf{1}_{h(\bm{x} + \bm{\varepsilon}_0)= y} \right].
\end{align}
Therefore, 
\begin{align}
\mathbb{E}_{D}\left[\inf_{\bm{\varepsilon}}\mathbf{1}_{h(\bm{x} + \bm{\varepsilon})= y} \right] \leq \inf_{\bm{\varepsilon}} \mathbb{E}_{D}\left[\mathbf{1}_{h(\bm{x} + \bm{\varepsilon})= y} \right] = \mathcal{G}^{\mathrm{u}}_{\mathrm{AR}}. 
\end{align}
Following the same logic, we show that.
\begin{align}
\mathcal{G}^{\mathrm{c}}_{\mathrm{NPPR}} &= \mathbb{E}_{D}\left[\inf_{\omega}\mathbb{E}_{\omega}\left[\mathbf{1}_{h(\bm{x} + \bm{\varepsilon})= y} \right]\right] \\ 
&\leq \inf_{\omega}\mathbb{E}_{D}\left[\mathbb{E}_{\omega}\left[\mathbf{1}_{h(\bm{x} + \bm{\varepsilon})= y} \right]\right] \\ 
&= \mathcal{G}^{\mathrm{u}}_{\mathrm{NPPR}}.
\end{align}
\end{proof}

\subsection{Convergence Analysis}
\label{app_sec:converge_analysis}
\begin{lemma}[Approximation error of Gumbel--Softmax relaxation]
\label{app_lem:objectives_err}
Let $\mathcal{L}(\bm{\phi})$ and $\mathcal{L}_{\tau}(\bm{\phi})$
denote the objectives defined in
Eq.~\ref{eq:NPPR_gmm} and Eq.~\ref{eq:NPPR_gumbel}, respectively.
Assume that the loss function $\varphi$ is $L$-Lipschitz.
Suppose the perturbation budget satisfies $\Vert \bm{\varepsilon}\Vert\leq \gamma$ almost surely. Assume that for the Gumbel--Max representation of the categorical variable,
there exists a unique maximizer
\begin{align}    
k^\star = \arg\max_{k\in[K]} (\log\pi_k + g_k),
\end{align}
and define the margin
\begin{align}
\Delta \triangleq \min_{k\neq k^\star}
\big[(\log\pi_{k^\star}+g_{k^\star})-(\log\pi_k+g_k)\big] > 0 .
\end{align}
Then there exists a constant $C>0$ such that
\begin{align}
\big| \mathcal{L}(\bm{\phi}) - \mathcal{L}_{\tau}(\bm{\phi}) \big|
\le (K-1)\,L\,\gamma e^{-\frac{C}{\tau}}.
\end{align}
\end{lemma}

\begin{proof}
For simplicity and concise of our analysis, assume a fixed bicubic up-sampler $\mathcal{U}$ and denote 
$G(\bm{\varepsilon};\bm{x}, y) = \varphi\big({h(\bm{x} + \mathcal{U}(\bm{\varepsilon})),y\big)}$ and assume it is $L$-Lipschitz w.r.t. $\bm{\varepsilon}$ in a norm space, and let $\bm{\pi} = (\pi_1,...\pi_K), \pi_k \geq 0, \sum_{k=1}^{K}\pi = 1$ be weights related to GMM, $\mathrm{Cat}(\bm{\pi})$ be category distribution with probability of category $k$ be $\pi_k$, $\mathcal{N}(\bm{\mu}_{k}, \Sigma_k)$ be Gaussian, hence our objective function becomes 

\begin{align}
\mathcal{L}(\bm{\phi}) &= \mathbb{E}_{(\bm{x}, y)\sim D}\mathbb{E}_{\bm{\varepsilon} \sim \mathrm{GMM}_{\bm{\phi}}}
[G(\bm{\varepsilon};\bm{x}, y)] \\ 
&= \mathbb{E}_{(\bm{x}, y)\sim D}\mathbb{E}_{k\sim \mathrm{Cat}(\bm{\pi}), \bm{\varepsilon}\sim \mathcal{N}(\bm{\mu}_{k}, \Sigma_k)}
[G(\bm{\varepsilon};\bm{x},y)]
\end{align}

Since $k\sim \mathrm{Cat}(\bm{\pi})$ is equivalent to Gumbel-max, Let $\bm{g} = (g_1,\ldots,g_K)$ be a Gumbel distributed r.v. such that $g_k \overset{i.i.d.}{\sim} \mathrm{Gumbel}(0,1)$, we have 
\begin{align}
   k^{\star} = \arg\max_{k\in[K]}(\log \pi_k + g_k) \sim \mathrm{Cat}(\bm{\pi})
\end{align}
Therefore, 
\begin{align}
\mathcal{L}(\bm{\phi}) &= \mathbb{E}_{(\bm{x},y)}\mathbb{E}_{\bm{g}}\mathbb{E}_{\bm{\varepsilon}\sim \mathcal{N}(\bm{\mu}_{k^{\star}}, \Sigma_{k^{\star}})}
[G(\bm{\varepsilon};\bm{x},y)]
\end{align}
Gumbel-softmax trick smooth the category random variable with softmax function, such that $\bm{\varepsilon}$ is not drawing from $\mathcal{N}(\bm{\mu}_{k^{\star}}, \Sigma_{k^{\star}})$, instead, it draws from a weighted sum of $K$ Gaussian with a temperature $\tau$. The weight is given by 
\begin{align}
    w_{\tau, k} = \mathrm{softmax}\left( \frac{\log \pi_k + g_k}{\tau} \right) \text{  for } k \in [K]
\end{align}
and the corresponding $\bm{\varepsilon}^{\prime}$ is 
\begin{align}
    \bm{\varepsilon}^{\prime} = \sum_{k=1}^{K}w_{\tau, k}\bm{\varepsilon}_{k}
\end{align}
The $L_{\tau}(\phi)$ is hence constructed as 
\begin{align}
L_{\tau}(\phi) = \mathbb{E}_{(\bm{x},y)}\mathbb{E}_{\bm{g}}\mathbb{E}_{\bm{\varepsilon}_1,\ldots, \bm{\varepsilon}_K}[G(\bm{\varepsilon}^{\prime};\bm{x},y)]
\end{align}
Hence, 
\begin{align}
    \vert &L(\phi) - L_{\tau}(\phi) \vert \\
    &\leq \mathbb{E}_{(\bm{x},y)}\mathbb{E}_{\bm{g}} \mathbb{E}_{\bm{\varepsilon}_1,\ldots, \bm{\varepsilon}_K}\vert G(\bm{\varepsilon_{k^{\star}}};\bm{x},y) - G(\bm{\varepsilon^{\prime}};\bm{x},y)\vert \\ 
    &\leq L\mathbb{E}_{(\bm{x},y)}\mathbb{E}_{\bm{g}} \mathbb{E}_{\bm{\varepsilon}_1,\ldots, \bm{\varepsilon}_K}\Vert \bm{\varepsilon_{k^{\star}}} - \bm{\varepsilon^{\prime}}\Vert
\end{align}
where 
\begin{align}
\bm{\varepsilon}_{k^{\star}} - \bm{\varepsilon^{\prime}} &= \sum_{k\neq k^{\star}} w_{\tau} \bm{\varepsilon}_{k} 
\end{align}
And since each $\bm{\varepsilon}_k$ is bounded $\gamma$, we have 
\begin{align}
\Vert \bm{\varepsilon}_{k^{\star}} - \bm{\varepsilon^{\prime}}\Vert &\leq \gamma (1 - w_{\tau, k^{\star}}).  
\end{align}
Denote $s_k = \log \pi_k + g_k$ and $\Delta_k = s_{k^\star} - s_k \ge 0$. Then the Gumbel--Softmax weight corresponding to the maximal logit satisfies
\begin{align}
w_{\tau,k^\star} = \frac{e^{s_{k^\star}/\tau}}{\sum_j e^{s_j/\tau}} = \frac{1}{1 + \sum_{k\neq k^\star} e^{-\Delta_k/\tau}}.
\end{align}
Consequently,
\begin{align}
1 - w_{\tau,k^\star}
&= \frac{\sum_{k\neq k^\star} e^{-\Delta_k/\tau}}
{1 + \sum_{k\neq k^\star} e^{-\Delta_k/\tau}} \\
&\le \sum_{k\neq k^\star} e^{-\Delta_k/\tau} .
\end{align}
Let $k_{m} \triangleq \arg\min_{k\neq k^\star} \Delta_k$ which yields  
\begin{align}
\vert L(\phi) - L_{\tau}(\phi) \vert &\leq (K-1)L\gamma\mathbb{E}_{\bm{g}} e^{-\Delta_{k_{m}}/\tau} \\ 
&\leq (K-1)L\gamma e^{-(\log\pi_{k^{\star}} - \log\pi_{k_{m}}) /\tau} 
\end{align}
\end{proof}

\begin{theorem}[Nonconvex SGD convergence for $L_{\tau,S}(\bm{\phi})$]
\label{app_thm:sgd_nonconvex_final}
Let $\bm{z} = (\bm{x}, y)$, the objective function in Eq.~\ref{eq:NPPR_gumbel} can be reparameterized as
\begin{align}
L_\tau(\bm{\phi}) = \mathbb{E}_{(\bm{x},y)\sim D} \mathbb{E}_{\bm{\zeta}} \Big[ \ell_\tau(\bm{\phi};\bm{z},\bm{\zeta}) \Big],
\end{align}
where $\bm{\zeta}$ denotes the auxiliary noise induced by the Gumbel--Softmax and Gaussian reparameterizations. Consider the empirical objective
\begin{align}
L_{\tau,S}(\bm{\phi})
\triangleq \frac{1}{N}\sum_{i=1}^N
\mathbb{E}_{\bm{\zeta}}\big[\ell_\tau(\bm{\phi};\bm{z}_i,\bm{\zeta})\big],
\end{align}
Assume:
\begin{enumerate}
\item[(A0)] \emph{(Regularity / differentiation under expectation)}
For all $\bm{\phi}$ and $\bm{z}$,
\begin{align}
\nabla_{\bm{\phi}}\mathbb{E}_{\bm{\zeta}}\left[\ell_\tau(\bm{\phi};\bm{z},\bm{\zeta})\right] = \mathbb{E}_{\bm{\zeta}}\left[\nabla_{\bm{\phi}}\ell_\tau(\bm{\phi};\bm{z},\bm{\zeta})\right].
\end{align}
\item[(A1)] \emph{(Bounded noise-induced variance)}
There exists $\sigma_{\bm{\zeta}}^2<\infty$ such that for all $\bm{\phi}$ and $\bm{z}$,
\begin{align}
\mathbb{E}_{\bm{\zeta}} \Big[ \big\|\nabla_{\bm{\phi}}\ell_\tau(\bm{\phi};\bm{z},\bm{\zeta}) - m(\bm{\phi};\bm{z})\big\|^2 \Big] \leq \sigma_{\bm{\zeta}}^2,
\end{align}
where $m(\bm{\phi};\bm{z}) \triangleq \mathbb{E}_{\bm{\zeta}}\left[\nabla_{\bm{\phi}}\ell_\tau(\bm{\phi};\bm{z},\bm{\zeta})\right]$.
\item[(A2)] \emph{(Bounded data variance)}
There exists $\sigma_S^2<\infty$ such that for all $\bm{\phi}$,
\begin{align}
\mathbb{E}_{\bm{z}\sim \mathrm{Unif}(S)}
\big[ \|m(\bm{\phi};\bm{z})-\nabla L_{\tau,S}(\bm{\phi})\|^2 \big] \leq \sigma_S^2.
\end{align}

\item[(A3)] \emph{(Smoothness and lower boundedness)}
The function $L_{\tau,S}(\bm{\phi})$ is $\beta$-smooth and bounded below by $L_{\tau,S}^\star \triangleq \inf_{\bm{\phi}} L_{\tau,S}(\bm{\phi})$.
\end{enumerate}

Consider SGD with step size $\eta \leq 1/\beta$, the update of the parameter is 
\begin{align}
\bm{\phi}_{t+1} &= \bm{\phi}_t - \eta \bm{g}_t \\ 
\bm{g}_t &\triangleq \frac{1}{bM}\sum_{\bm{z}\in B_t}\sum_{j=1}^M \nabla_{\bm{\phi}}\ell_\tau(\bm{\phi}_t;\bm{z},\bm{\zeta}_j),
\end{align}
where $B_t$ is a uniform mini-batch of size $b$ and $\{\bm{\zeta}_j\}_{j=1}^M$ are i.i.d. Then for any $T\ge 1$,
\begin{align}
&\frac{1}{T}\sum_{t=0}^{T-1} \mathbb{E}\Big[\|\nabla L_{\tau,S}(\bm{\phi}_t)\|^2\Big] \\
&= \mathcal{O}\left( \frac{L_{\tau,S}(\bm{\phi}_0)-L_{\tau,S}^\star}{\eta T} + \beta\eta\Big(\frac{\sigma_{\bm{\zeta}}^2}{bM} + \frac{\sigma_S^2}{b}\Big) \right).
\end{align}
where $\mathcal{O}(\cdot)$ hides universal numerical constants.
\end{theorem}

\begin{proof}
We prove our theorem from following steps:
\paragraph{Step 1: Reparameterization of $L_\tau$}
Reuse the notation of proof in Lem.~\ref{app_lem:objectives_err}, we have 
\begin{align}
L_\tau(\bm{\phi}) = \mathbb{E}_{(\bm{x},y)\sim D}
\mathbb{E}_{\bm g} \mathbb{E}_{\bm{\varepsilon}_1,\ldots,\bm{\varepsilon}_K} \big[ G(\bm{\varepsilon}^{\prime};\bm{x}, y) \big],
\end{align}
where 
\begin{align}
G(\bm{\varepsilon};\bm{x},y) &= \varphi\!\left(h\!\left(\bm{x}+\mathcal U(\bm{\varepsilon})\right),y\right)\\
\bm{\varepsilon}^{\prime} &= \sum_{k=1}^K w_{\tau,k}\bm{\varepsilon}_k.
\end{align}
The relaxed mixture weights are given by the Gumbel--Softmax construction 
\begin{align}
w_{\tau,k} &=  \frac{e^{(\log\pi_k+g_k)/\tau}}
{\sum_{j=1}^K e^{(\log\pi_j+g_j)/\tau}} \\ 
\text{where } g_k &\overset{i.i.d.}{\sim}\mathrm{Gumbel}(0,1). 
\end{align}
Let $\bm{\xi}_k \sim \mathcal N(\bm{0}, I), \forall k \in [K]$, we reparameterize each Gaussian component as
\begin{align}
\bm{\varepsilon}_k = \bm{\mu}_k + A_k \bm{\xi}_k,
\end{align}
where $A_k A_k^{T} = \Sigma_k$, denoting the Cholesky decomposition. Hence, we have our reparameterized results as 
\begin{align}
\bm{\varepsilon}^{\prime}(\bm{\phi}&;\bm{x},y,\bm g,\bm\xi) = \sum_{k=1}^K w_{\tau,k}(\bm{\phi};\bm{x},y, \bm{g})
\bm{\mu}_k(\bm{\phi};\bm{x},y) \\  
&\quad + \sum_{k=1}^K w_{\tau,k}(\bm{\phi};\bm{x},y,\bm g)A_k(\bm{\phi};\bm{x},y)\bm{\xi}_k,
\end{align}
where $\bm\xi \triangleq (\bm{\xi}_1,\ldots,\bm{\xi}_K), \bm{g} = (g_1, \ldots, g_K)$. Defining the auxiliary noise variable
\begin{align}
\bm{\zeta}
\triangleq
(\bm g,\bm\xi),
\end{align}
For notational simplicity, we drop the $(\bm{x}, y)$ from $G(\bm{\varepsilon};\bm{x}, y)$ and we equivalently rewrite the relaxed objective as
\begin{align}
L_\tau(\bm{\phi}) = \mathbb{E}_{(\bm{x},y)\sim D} \mathbb{E}_{\bm{\zeta}}
\Big[ G\big(\bm{\varepsilon}'(\bm{\phi};\bm{x},y,\bm{\zeta})\big) \Big].
\end{align}
Under standard regularity conditions, the gradient of $L_\tau$ admits the form
\begin{align}
\nabla_{\bm{\phi}} L_\tau(\bm{\phi}) = \mathbb{E}_{(\bm{x},y)\sim D} \mathbb{E}_{\bm{\zeta}}
\Big[ \nabla_{\bm{\phi}} G\big(\bm{\varepsilon}'(\bm{\phi};\bm{x},y,\bm{\zeta})\big) \Big],
\end{align}
which forms the basis of the stochastic gradient updates used in Algorithm~\ref{alg:train_gmm4pr}. For convenience, let $\bm{z} = (\bm{x}, y)$ and 
\begin{align}
\ell_\tau(\bm{\phi}; \bm{z},\bm{\zeta}) \triangleq G\big(\bm{\varepsilon}^{\prime}(\bm{\phi}; \bm{z},\bm{\zeta})\big),
\end{align}
the empirical objective becomes 
\begin{align}
L_{\tau,S}(\bm{\phi})
\triangleq \frac{1}{N}\sum_{i=1}^N
\mathbb{E}_{\bm{\zeta}}
\big[ \ell_\tau(\bm{\phi}; \bm{z}_i,\bm{\zeta}) \big].
\end{align}
\paragraph{Step 2: Stochastic Gradient Estimator} We first show that our gradient estimator is unbiased. Consider the empirical objective, 
% \begin{align}
% L_{\tau,S}(\bm{\phi})
% \triangleq \frac{1}{n}\sum_{i=1}^n
% \mathbb{E}_{\bm{\zeta}}
% \big[ \ell_\tau(\bm{\phi}; \bm{z}_i,\bm{\zeta}) \big],
% \end{align}
% where $\bm{z}_i=(\bm{x}_i,y_i)$,
% $\bm{\zeta}$ collects all auxiliary noise variables,
% and
% \begin{align}
% \ell_\tau(\bm{\phi}; \bm{z},\bm{\zeta}) \triangleq G\big(\bm{\varepsilon}^{\prime}(\bm{\phi}; \bm{z},\bm{\zeta})\big).
% \end{align}
At iteration $t$, let $B_t$ be a mini-batch of size $b$ sampled uniformly from the training set, and let $\{\bm{\zeta}_{j}\}_{j=1}^M$ denote independent realizations of the auxiliary noise. The stochastic gradient estimator used by Algorithm~\ref{alg:train_gmm4pr} is given by
\begin{align}
\bm{g}_t \triangleq \frac{1}{bM} \sum_{z\in B_t} \sum_{j=1}^M \nabla_{\bm{\phi}} \ell_\tau(\bm{\phi}_t; \bm{z},\bm{\zeta}_{j}).
\end{align}
Under standard regularity conditions (dominated convergence) that justify interchanging gradient and expectation, the estimator $\bm{g}_t$ is unbiased in the sense that (For clarity, we temporarily drop the condition $\bm{\phi}_t$ and use subscripts to denote conditioning.)
\begin{align}
&\mathbb{E}_{B_{t},\bm{\zeta}_{j\in[M]}}\big[\bm{g}_t \mid \bm{\phi}_t\big] \\
&= \frac{1}{bM}\mathbb{E}_{B_{t}}\left[\sum_{\bm{z}\in B_{t}} \sum_{j=1}^M\mathbb{E}_{\bm{\zeta}_{j}}\big[ \nabla_{\bm{\phi}} \ell_\tau(\bm{\phi}_t; \bm{z},\bm{\zeta}_{j}) \big]\right]\\
&= \nabla_{\bm{\phi}}\frac{1}{bM}\mathbb{E}_{B_{t}}\left[\sum_{\bm{z}\in B_{t}} \sum_{j=1}^M\mathbb{E}_{\bm{\zeta}_{j}}\big[ \ell_\tau(\bm{\phi}_t; \bm{z},\bm{\zeta}_{j}) \big] \right]\\
&= \nabla_{\bm{\phi}}\frac{1}{b}\mathbb{E}_{B_{t}}\left[\sum_{\bm{z}\in B_{t}}\mathbb{E}_{\bm{\zeta}}\big[ \ell_{\tau}(\bm{\phi}_t; \bm{z},\bm{\zeta}) \big] \right] \\
&= \nabla_{\bm{\phi}} L_{\tau,S}(\bm{\phi}_t),
\end{align}
where the expectation is taken with respect to the randomness of the mini-batch sampling and the auxiliary noise. 

Now, we show that the estimator has bound variance. By the law of total variance, conditioning on the mini-batch $B_t$,we have
\begin{align}
\mathbb{E}\big[\|\bm{g}_t-\mathbb{E}\bm{g}_t\|^2\big] &= \mathbb{E}_{  B_t} \Big[ \mathbb{E}\big[\|\bm{g}_t-\mathbb{E}[\bm{g}_t\mid  B_t]\|^2 \mid B_t\big] \Big] \notag\\
&\quad+ \mathbb{E}_{  B_t}
\Big[\|\mathbb{E}[\bm{g}_t\mid  B_t]-\mathbb{E}\bm{g}_t\|^2 \Big].
\label{eq:total_variance}
\end{align}

We first analyze the conditional expectation. For a fixed mini-batch $B_t$, we have 
\begin{align}
\mathbb{E}[\bm{g}_t\mid B_t] &= \frac{1}{bM} \sum_{z\in B_t} \sum_{j=1}^M \mathbb{E}_{\bm{\zeta}_{j}}\Big[\nabla_{\bm{\phi}} \ell_\tau(\bm{\phi}_t; \bm{z},\bm{\zeta}_{j})\Big] \\
&=\frac{1}{b}\sum_{\bm{z}\in  B_t}\mathbb{E}_{\bm{\zeta}}[\bar{\bm{g}}_{M}(\bm{\phi}_t; \bm{z})] \\
&= \frac{1}{b}\sum_{\bm{z}\in  B_t}m(\bm{\phi}_t; \bm{z}).
\end{align}
where 
\begin{align}
\bar{\bm{g}}_{M}(\bm{\phi}; \bm{z}) = \frac{1}{M}\sum_{j=1}^{M}\nabla_{\bm{\phi}}\ell_\tau(\bm{\phi}; \bm{z},\bm{\zeta}_{j})
\end{align}
denotes the $M$ average sample of auxiliary noise. Taking expectation over $B_t$ yields
\begin{align}
\mathbb{E}\bm{g}_t &= \mathbb{E}_{B_t} \Big[\frac{1}{b}\sum_{\bm{z}\in  B_t}m(\bm{\phi}_t;\bm{z})\Big] \\
&= \frac{1}{N}\sum_{i=1}^N m(\bm{\phi}_t,\bm{z}_i) \\
&= \nabla L_{\tau,S}(\bm{\phi}_t),
\end{align}
where the last equality follows from the definition of $L_{\tau,S}$. We now bound the two terms in~\eqref{eq:total_variance}.
For the first term, conditioning on $B_t$,
\begin{align}
\bm{g}_t-\mathbb{E}[\bm{g}_t\mid  B_t] = \frac{1}{b}\sum_{\bm{z}\in B_t} \big(\bar{\bm{g}}_M(\bm{\phi}_t; \bm{z})-m(\bm{\phi}_t; \bm{z})\big).
\end{align}
Using independence of the auxiliary noise variables, we obtain
\begin{align}
&\mathbb{E}\big[\|\bm{g}_t-\mathbb{E}[\bm{g}_t\mid  B_t]\|^2
\mid  B_t\big] \\
&= \frac{1}{b^2}\mathbb{E}\left[\left\Vert\sum_{\bm{z}\in B_t} \big(\bar{\bm{g}}_M(\bm{\phi}_t; \bm{z})-m(\bm{\phi}_t; \bm{z})\big) \right\Vert^2 \mid B_t \right] \\ 
&= \frac{1}{b^2}\sum_{\bm{z}\in B_t}\mathbb{E}\left[\left\Vert\bar{\bm{g}}_M(\bm{\phi}_t; \bm{z})-m(\bm{\phi}_t; \bm{z})\right\Vert^2  \mid B_t \right] \\ 
&\leq \frac{\sigma_{\bm{\zeta}}^2}{bM}
\end{align}

For the second term in~\eqref{eq:total_variance}, using Assumption~(A2) and standard properties of mini-batch sampling, we have
\begin{align}
\mathbb{E}_{B_t}
&\Big[ \|\mathbb{E}[\bm{g}_t\mid  B_t]-\mathbb{E}\bm{g}_t\|^2 \Big] \\
&= \mathbb{E}_{B_t} \Big\|\frac{1}{b}\sum_{\bm{z}\in  B_t}m(\bm{\phi}_t; \bm{z}) - \frac{1}{N}\sum_{i=1}^N m(\bm{\phi}_t; \bm{z}_i) \Big\|^2 \\ 
&\leq \frac{\sigma_S^2}{b}.
\end{align}

Combining the two bounds completes the proof.
\paragraph{Step 3: Convergence Result}
We assume that the empirical objective $L_{\tau,S}(\bm{\phi})$ is $\beta$-smooth, i.e., $\forall \bm{\phi}, \bm{\phi}^{\prime}$,
\begin{align}
\|\nabla L_{\tau,S}(\bm{\phi}) - \nabla L_{\tau,S}(\bm{\phi}^{\prime})\|
\le \beta \|\bm{\phi} - \bm{\phi}^{\prime}\|.
\end{align}
The parameter is updated by
\begin{align}
\bm{\phi}_{t+1} = \bm{\phi}_t - \eta\, \bm{g}_t ,
\end{align}
where $\bm{g}_t$ is an unbiased stochastic gradient estimator of
$\nabla L_{\tau,S}(\bm{\phi}_t)$. By $\beta$-smoothness of $L_{\tau,S}$, we have
\begin{align}
L_{\tau,S}(\bm{\phi}_{t+1}) &\leq L_{\tau,S}(\bm{\phi}_t) + \langle \nabla L_{\tau,S}(\bm{\phi}_t), \bm{\phi}_{t+1}-\bm{\phi}_t \rangle \notag \\
&\qquad+ \frac{\beta}{2}\|\bm{\phi}_{t+1}-\bm{\phi}_t\|^2.
\end{align}
Substituting the update rule yields
\begin{align}
L_{\tau,S}(\bm{\phi}_{t+1})
\le
L_{\tau,S}(\bm{\phi}_t)
- \eta \langle \nabla L_{\tau,S}(\bm{\phi}_t), \bm{g}_t \rangle
+ \frac{\beta \eta^2}{2} \|\bm{g}_t\|^2.
\end{align}
Taking expectation conditioned on $\bm{\phi}_t$ and using the unbiasedness
$\mathbb{E}[\bm{g}_t \mid \bm{\phi}_t] = \nabla L_{\tau,S}(\bm{\phi}_t)$, we obtain
\begin{align}
\mathbb{E}\left[L_{\tau,S}(\bm{\phi}_{t+1}) \mid \bm{\phi}_t\right] &\leq L_{\tau,S}(\bm{\phi}_t) - \eta \|\nabla L_{\tau,S}(\bm{\phi}_t)\|^2 \notag \\
&\qquad + \frac{\beta \eta^2}{2}\, \mathbb{E}\left[\|\bm{g}_t\|^2 \mid \bm{\phi}_t\right].
\end{align}
By the bounded variance results and let $\sigma^2 = \frac{\sigma^2_{\bm{\zeta}}}{bM} + \frac{\sigma^2_{S}}{b}$, we have 
\begin{align}
\mathbb{E}\left[\|\bm{g}_t - \nabla L_{\tau,S}(\bm{\phi}_t)\|^2 \mid \bm{\phi}_t\right] \leq \sigma^2,
\end{align}
hence
\begin{align}
\mathbb{E}\left[\|\bm{g}_t\|^2 \mid \bm{\phi}_t\right] \leq \|\nabla L_{\tau,S}(\bm{\phi}_t)\|^2 + \sigma^2.
\end{align}
Substituting back yields
\begin{align}
&\mathbb{E}\left[L_{\tau,S}(\bm{\phi}_{t+1}) \mid \bm{\phi}_t\right] \notag \\
&\leq L_{\tau,S}(\bm{\phi}_t) - \eta\!\left(1-\frac{\beta\eta}{2}\right) \|\nabla L_{\tau,S}(\bm{\phi}_t)\|^2 + \frac{\beta\eta^2}{2}\sigma^2.
\end{align}
Assuming $\eta \leq 1/\beta$ and that $L_{\tau,S}$ is bounded below by
$L_{\tau,S}^\star$, summing over $t=0,\ldots,T-1$ gives
\begin{align}
&\frac{1}{T}\sum_{t=0}^{T-1}
\mathbb{E}\big[\|\nabla L_{\tau,S}(\bm{\phi}_t)\|^2\big] \notag\\
&\qquad\leq \frac{2\big(L_{\tau,S}(\bm{\phi}_0) - L_{\tau,S}^\star\big)}{\eta T} + \beta \eta \sigma^2 .
\end{align}
\end{proof}

\section{Parametric GMM Approximation of the Ideal NPPR Objective}
\label{app:gmm_approx_nppr}

In this section, we clarify the relationship between the ideal non-parametric probabilistic robustness objective and the finite-dimensional GMM-based estimator used in our implementation. The term ``non-parametric'' refers to the metric-level formulation: NPPR is defined by optimizing over an admissible set of perturbation distributions, without assuming a fixed perturbation family \emph{a priori}. In practice, however, we use a finite-$K$ Gaussian mixture model (GMM) family as a tractable parametric approximation to this ideal objective.

Let $\mathcal{P}_{\varepsilon}$ denote the admissible perturbation distribution set supported on the perturbation budget, and let
\begin{equation}
\mathcal{G}_{\mathrm{NPPR}}(\bm{x},y) \triangleq \inf_{\omega \in \mathcal{P}_{\varepsilon}} \mathbb{E}_{\bm{\varepsilon}\sim \omega(\cdot\mid \bm{x}, y)}\left[\mathbf{1}_{h(\bm{x} + \mathcal{U}(\bm{\varepsilon})) = y}\right]
\end{equation}
be the ideal NPPR value. Here, the infimum is taken over the full admissible perturbation distribution set. This corresponds to the most conservative probabilistic robustness value under the admissible distributional uncertainty. In contrast, the practical estimator used in this paper restricts the perturbation distribution to a $K$-component GMM family. Let
\begin{equation}    
\mathcal{Q}_{K} \subseteq \mathcal{P}_{\varepsilon}
\end{equation}
denote the class of admissible $K$-component GMM perturbation distributions, and define
\begin{equation}
\mathcal{G}_{K}(\bm{x}, y)\triangleq\inf_{\omega \in \mathcal{Q}_{K}}\mathbb{E}_{\bm{\varepsilon}\sim \omega(\cdot\mid \bm{x}, y)}\left[\mathbf{1}_{h(\bm{x} + \mathcal{U}(\bm{\varepsilon}))=y}\right].
\end{equation}
Since $\mathcal{Q}_{K}$ is a restricted subset of $\mathcal{P}_{\varepsilon}$, we immediately have
\begin{equation}
\mathcal{G}_{\mathrm{NPPR}}(\bm{x}, y)\leq\mathcal{G}_{K}(\bm{x}, y).
\end{equation}
Therefore, the finite-GMM estimator is in general less conservative than the fully unrestricted NPPR objective, because it optimizes over a smaller family of perturbation distributions. On the other hand, the GMM family is more flexible than a single fixed reference distribution such as the standard Gaussian perturbation used in standard probabilistic robustness evaluation. Denote the corresponding standard PR value by
\begin{equation}
\mathcal{G}_{\mathrm{PR}}(\bm{x}, y)\triangleq\mathbb{E}_{\bm{\varepsilon}\sim \omega_{0}(\cdot\mid \bm{x},y)}\left[\mathbf{1}_{h(\bm{x} + \mathcal{U}(\bm{\varepsilon}))=y}\right],
\end{equation}
where $\omega_{0}$ is the fixed reference distribution. If $\omega_{0}\in \mathcal{Q}_{K}$, then
\begin{equation}
\mathcal{G}_{K}(\bm{x}, y)\leq\mathcal{G}_{\mathrm{PR}}(\bm{x}, y).
\end{equation}
Combining the two inequalities gives
\begin{equation}
\mathcal{G}_{\mathrm{NPPR}}(\bm{x}, y)\leq\mathcal{G}_{K}(\bm{x}, y)\leq\mathcal{G}_{\mathrm{PR}}(\bm{x}, y).
\end{equation}
This shows that the proposed finite-GMM estimator remains more conservative than standard PR, while it can be less conservative than the fully unrestricted NPPR objective due to the parametric restriction. We next quantify the approximation gap induced by restricting the admissible perturbation family to $\mathcal{Q}_{K}$. Define
\begin{equation}
\eta_{K}\triangleq\sup_{\omega\in \mathcal{P}_{\varepsilon}}\inf_{\widetilde{\omega}\in \mathcal{Q}_{K}} d_{\mathrm{TV}}(\omega,\widetilde{\omega}),
\end{equation}
where $d_{\mathrm{TV}}$ denotes the total variation distance. Since the indicator loss is bounded in $[0,1]$, for any two perturbation distributions $\omega$ and $\widetilde{\omega}$ we have
\begin{equation}
\left\vert\mathbb{E}_{\omega}\left[\mathbf{1}_{h(\bm{x} + \mathcal{U}(\bm{\varepsilon}))=y}\right] - \mathbb{E}_{ \widetilde{\omega}} \left[ \mathbf{1}_{h(\bm{x} + \mathcal{U}(\bm{\varepsilon}))=y}\right]\right\vert\leq d_{\mathrm{TV}}(\omega,\widetilde{\omega}).
\end{equation}
Consequently,
\begin{equation}
0 \leq \mathcal{G}_{K}(x,y) - \mathcal{G}_{\mathrm{NPPR}}(x,y) \leq \eta_{K}.
\end{equation}
Thus, the conservativeness gap between the finite-GMM estimator and the ideal NPPR objective is controlled by how well the $K$-component GMM family approximates the worst admissible perturbation distribution in total variation distance. In practice, the choice of $K$ controls the trade-off between expressiveness and computational tractability. A larger $K$ yields a richer perturbation family and can better approximate complex perturbation distributions, while a smaller $K$ gives a more efficient but potentially less conservative estimator. From an application perspective, the $K$ mixture components can also be interpreted as capturing several common perturbation modes, such as fog, motion blur, brightness change, or low-light noise.

We provide additional empirical evidence supporting the interpretation that a richer perturbation family leads to a more conservative robustness estimate. As discussed in Appendix~\ref{app:gmm_approx_nppr}, the finite-$K$ GMM estimator is a tractable parametric approximation to the ideal NPPR objective, and increasing $K$ enlarges the family of admissible perturbation distributions. Therefore, a larger $K$ is expected to identify more challenging perturbation distributions and produce smaller robustness values.

This trend is consistent with Tab.~\ref{tab:ablation} in the main paper. Under the joint-dependence setting, the robustness value decreases from $78.59$ at $K=3$ to $76.27$ at $K=7$, and further to $73.11$ at $K=12$. We further conduct a training-stage analysis on a small CIFAR-10 subset using ResNet18 with maximum batch size $20$. As shown in Tab.~\ref{tab:gmm_training_stage}, the estimated PR value decreases with longer optimization for both $K=1$ and $K=3$. Moreover, the $K=3$ estimator consistently gives smaller robustness values than $K=1$, suggesting that the richer mixture family yields a more conservative estimate. These results empirically support the role of $K$ as a trade-off between expressiveness and computational tractability.

\begin{table}[t]
\centering
\caption{\textbf{Additional training-stage analysis on a small CIFAR-10 subset using ResNet18.} Larger $K$ and longer optimization lead to smaller estimated PR values.}
\label{tab:gmm_training_stage}
\begin{tabular}{cccc}
\toprule
$K$ & Epoch & Loss & PR \\
\midrule
1 & 100 & 2.03 & 0.50 \\
1 & 300 & 1.60 & 0.39 \\
1 & 500 & 1.42 & 0.34 \\
3 & 100 & 1.60 & 0.39 \\
3 & 300 & 1.26 & 0.31 \\
3 & 500 & 1.08 & 0.25 \\
\bottomrule
\end{tabular}
\end{table}

\section{Details on Bicubic Up-sampling}
\label{app:bicubic}

\begin{table}[t!]
\centering
\caption{\textbf{Configuration summary for NPPR estimation.}}
\label{app_tab:config}
\begin{tabular}{l l}
\toprule
\textbf{Setting} & \textbf{Value} \\
\midrule
\multicolumn{2}{l}{\textbf{Model / Architecture}} \\
\midrule
Up-sampler & bicubic interpolation \\
norm & $\ell_\infty$ \\
$\epsilon$ & \{4/255, 8/255, 16/255\} \\
\midrule
\multicolumn{2}{l}{\textbf{GMM Parameters}} \\
\midrule
Initialization & uniform \\
Modes $K$ & \{3, 7, 12\} \\
Latent dim. & \{128, 256\} \\
Covariance type & full \\
Hidden dim. & \{256, 512\} \\
Label emb. dim. & \{64, 128\} \\
Label emb. norm. & \textsc{true} \\
\midrule
\multicolumn{2}{l}{\textbf{Training Hyperparameters}} \\
\midrule
Epochs & 50 \\
Learning rate & \{$5\times 10^{-4}, 2\times 10^{-2}$\} \\
LR warmup epochs & 20 \\
LR min & $2\times 10^{-6}$ \\
Loss type & C\&W \\
$\kappa$ & 1 \\
Samples per input & 32 \\
\midrule
\multicolumn{2}{l}{\textbf{Annealing Schedule}} \\
\midrule
$T_{\pi}$ (init → final) & 3.0 → 1.0 \\
$T_{\mu}$ (init → final) & 3.0 → 1.0 \\
$T_{\sigma}$ (init → final) & 1.5 → 1.0 \\
$T_{\text{shared}}$ (init → final) & 1.5 → 1.0 \\
Gumbel anneal & \textsc{true} \\
Gumbel temp (init → final) & 1.0 → 0.1 \\
\bottomrule
\end{tabular}
\end{table}

Since input images typically reside in high-dimensional spaces, the covariance matrix of the perturbation distribution becomes prohibitively large, scaling as $\mathcal{O}(d^2)$ with respect to the input dimension~$d$. This quadratic growth renders both storage and computation infeasible when the input dimension is large, as in modern image datasets. To mitigate this issue, we perform the perturbation modeling in a lower-dimensional space, reducing computational overhead, and subsequently map the perturbations back to the input space using \textit{bicubic interpolation}. This approach is computationally efficient while preserving the spatial smoothness of the perturbations, which has been studied in the robustness-related literature~\cite{dong2014learning,wang2021real}.

Our bicubic up-sampling is composed of a linear mapping and a bicubic interpolation module. For a given pixel position $(x, y)$ in the upsampled image, bicubic interpolation estimates its intensity as a weighted sum of the $4 \times 4$ neighboring pixels in the original image:
\begin{equation}
    I'(x, y) = \sum_{m=-1}^{2} \sum_{n=-1}^{2} w(m, x)\, w(n, y)\, I(i+m, j+n),
\end{equation}
where $I(i+m, j+n)$ denotes the neighboring pixel values and $w(\cdot, \cdot)$ represents the interpolation weights determined by a cubic convolution kernel. The one-dimensional cubic kernel $w(a)$ is defined as a piecewise cubic polynomial~\cite{keys2003cubic}:
\begin{align}
w(a) =
\begin{cases}
(1.5)|a|^3 - 2.5|a|^2 + 1, & \text{if } |a| < 1,\\[4pt]
-0.5|a|^3 + 2.5|a|^2 - 4|a| + 2, & \text{if } 1 \le |a| < 2,\\[4pt]
0, & \text{otherwise.}
\end{cases}
\end{align}
This kernel ensures smoothness and locality, producing continuous first derivatives while limiting interpolation to the $4 \times 4$ neighborhood around $(i, j)$. Bicubic up-sampling has been widely adopted as a baseline in image super-resolution and restoration tasks~\cite{dong2016accelerating}.

To ensure that the support of the perturbation distribution lies within the prescribed $L_p$-norm ball, we apply the mapping $g_{\mathcal{B}}$, defined as
\begin{align}
    g_{\mathcal{B}} = \gamma \tanh(\cdot),
\end{align}
which is a commonly used constraint mechanism in robustness literature~\cite{chen2021towards,huang2019black}.

\begin{figure}[t]
  \centering
   \includegraphics[width=0.95\linewidth]{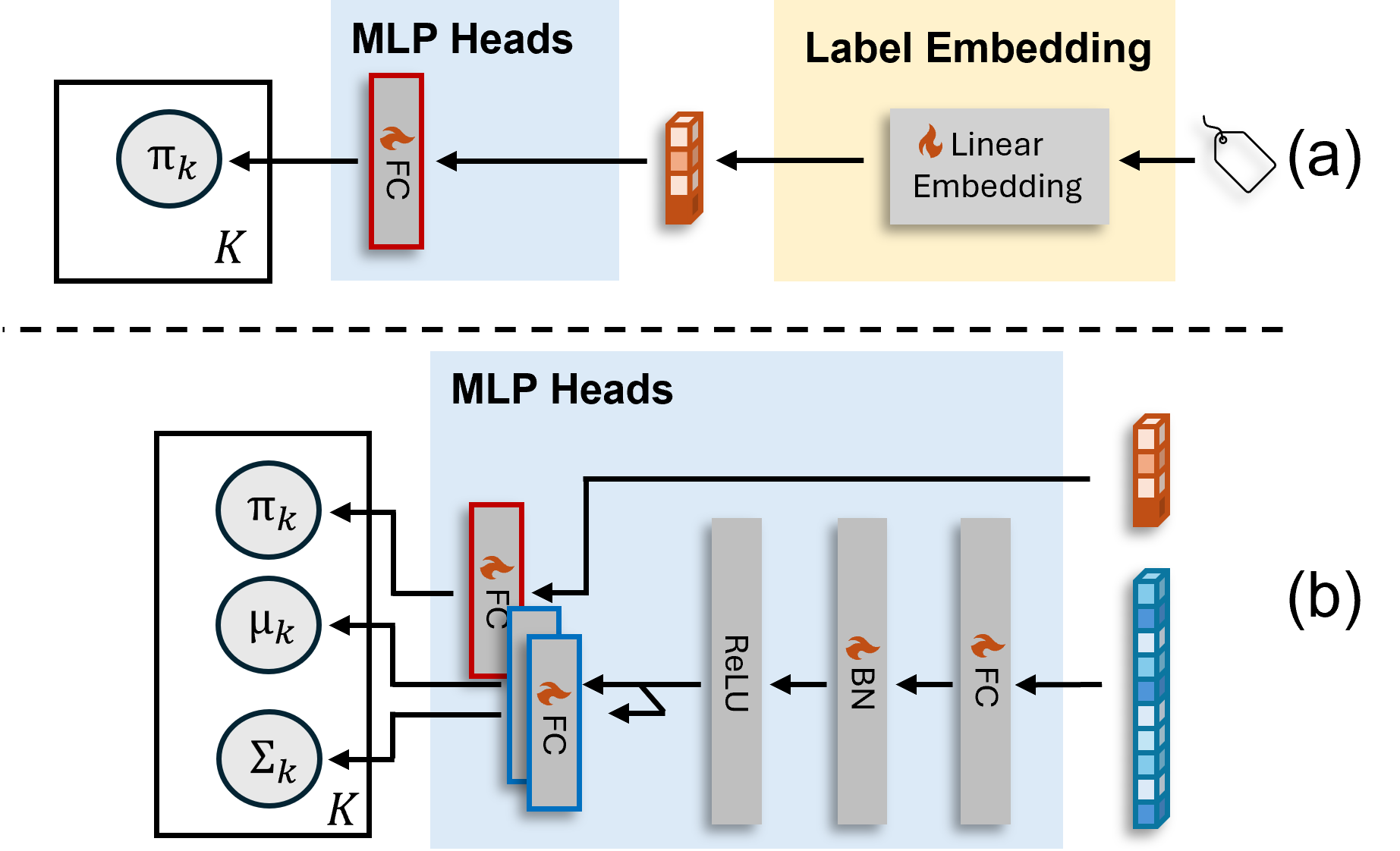}
   \caption{\textbf{Different dependency constructions.} We employ distinct MLP heads to model different dependency structures. \textbf{Panel~(a)} illustrates the setting in which the perturbation distribution is conditioned solely on the ground-truth label, whereas \textbf{panel~(b)} depicts the joint dependency case, where perturbations are conditioned on both the input features and labels, with the labels influencing only the mixture proportions. The label embedding in panel~(b) is omitted for clarity, as it is identical to that in panel~(a).}
   \label{fig:arch_variants}
\end{figure}

\section{Detailed Experiment Settings}
\label{app:exp_setting}

\begin{figure}[t]
  \centering
   \includegraphics[width=1\linewidth]{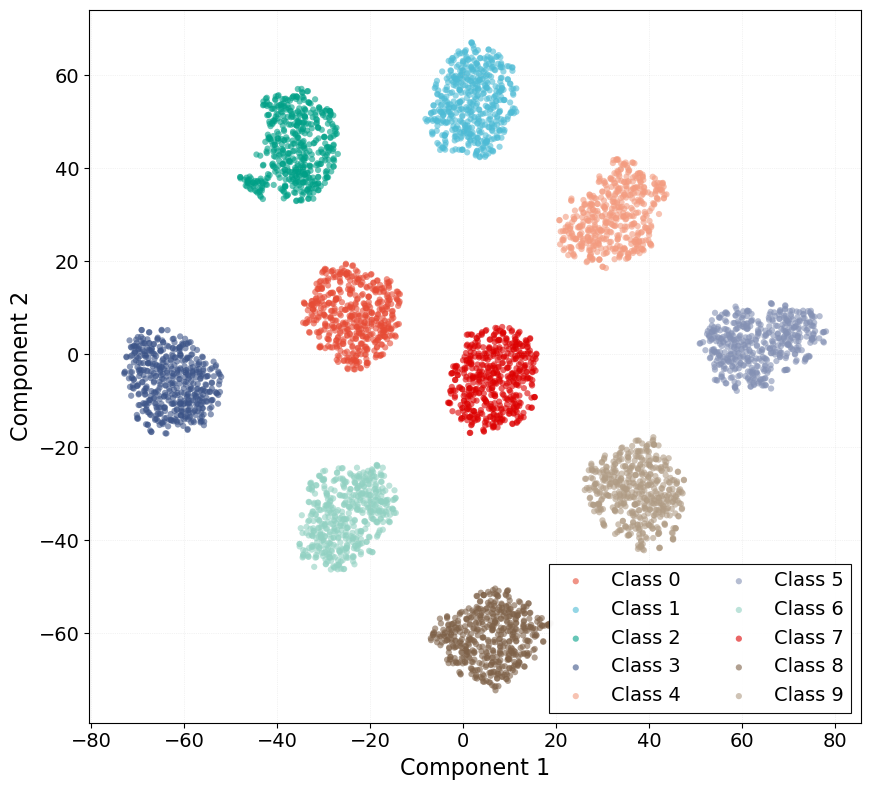}
   \caption{\textbf{The t-SNE plot for the jointly dependent case.} We additionally visualize the jointly conditioned model for ResNet18 on CIFAR-10. With the added dependence on inputs, the perturbation distributions for different classes become fully disentangled.}
   \label{app_fig:t_sne_xy}
\end{figure}

\begin{figure*}[t]
  \centering
   \includegraphics[width=0.95\linewidth]{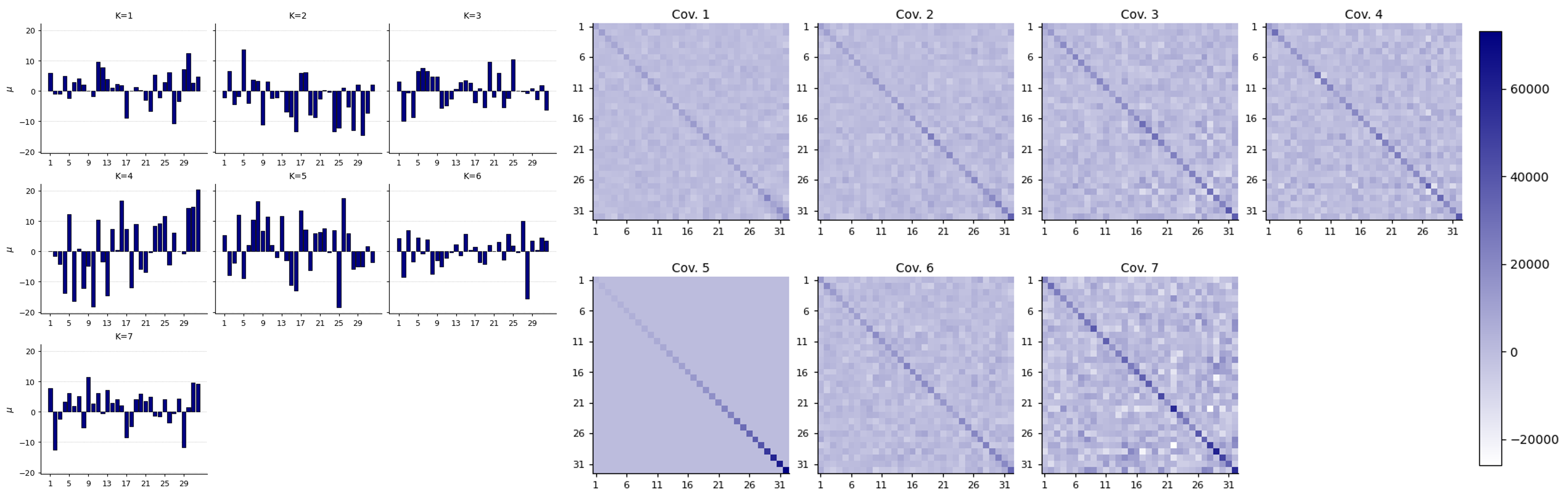}
   \caption{\textbf{Bar plot and heatmap of mixture component means and covariances.} For the input-dependent case on ResNet18 (CIFAR-10), we randomly select one input and visualize the GMM parameters after reducing the feature dimension to 32 using PCA. Specifically, we display a bar plot of the mixture means and a heatmap of the corresponding covariance matrices.}
   \label{app_fig:mu_sig_cond(x)}
\end{figure*}

We provide the detailed experimental configurations in Tab.~\ref{app_tab:config}.
For the independent–perturbation setting with a fixed up-sampler, we adopt a different training strategy from the other cases because this setting is substantially harder to optimize. Specifically, we use a larger learning rate of $2\times 10^{-2}$ with a cosine cyclical scheduler and a 20-epoch warm-up, which yields the most stable training trajectory. 

For all other dependency settings, including those with a learnable up-sampler, we use a fixed learning rate of $5\times 10^{-4}$.
Except for the runs shown in Fig.~\ref{fig:conditional_analysis_dist}, which are trained for 200 epochs for visualization purposes, all reported results are trained for 50 epochs.

\paragraph{Annealing Schedule} 
We adopt an annealing strategy for the mixture weights as well as the parameters of each mixture-component distribution. This is motivated by the substantial imbalance in the number of parameters associated with the mixture weights, means, and covariance matrices. For example, when using $K=3$, a latent dimension of 128, and a hidden dimension of 256, the mixture weights require only $3 \times 256$ parameters, whereas the mean and covariance heads require $128 \times 256$ and $128^{2} \times 256$ parameters, respectively. Such a disparity can cause optimization to be dominated by the larger parameter groups, leading to suboptimal local minima. To mitigate this imbalance, we apply annealing to stabilize training and prevent premature convergence to poor solutions.

\subsection{Gumbel softmax trick}
\label{app_sec:gumbel}
Training a Gaussian Mixture Model (GMM) within a gradient-based framework requires differentiating through the discrete mixture-selection variable. Specifically, for each perturbation sample $\bm{\varepsilon}_i$, a categorical latent variable $z_i \in \{1,\dots,K\}$ determines which Gaussian component generates the sample. Directly sampling $z_i \sim \mathrm{Cat}(\pi_1,\dots,\pi_K)$ is non-differentiable, preventing backpropagation. To overcome this limitation, we adopt the Gumbel--Softmax (also known as the Concrete) relaxation~\cite{jang2016categorical, maddison2016concrete}, which provides a differentiable approximation to categorical sampling.

The trick relies on the Gumbel perturbation property: if $g_k$ are i.i.d.\ samples from $\mathrm{Gumbel}(0,1)$, then
\begin{align}
    z = \arg\max_{k} \left( \log \pi_k + g_k \right)
\end{align}
is exactly distributed as a categorical random variable with probabilities $\{\pi_k\}_{k=1}^K$. Instead of taking the non-differentiable $\arg\max$, Gumbel--Softmax introduces a temperature-controlled softmax relaxation:
\begin{align}
    \tilde{z}_k 
    = \frac{\exp\left( \left(\log \pi_k + g_k\right) / \tau \right)}
           {\sum_{j=1}^{K} \exp\left( \left(\log \pi_j + g_j\right) / \tau \right)},
    \qquad k = 1,\dots,K,
\end{align}
where $\tau > 0$ is a temperature parameter. When $\tau \rightarrow 0$, the distribution becomes increasingly ``one-hot,'' recovering a true categorical sample; when $\tau$ is larger, the distribution is smoother, enabling stable gradients.

The reparameterized mixture selection is therefore given by the continuous vector
\begin{align}
    \tilde{\bm{z}} = (\tilde{z}_1,\dots,\tilde{z}_K),
\end{align}
which lies in the probability simplex and is fully differentiable with respect to the mixture weights $\pi_k$. This relaxed one-hot vector replaces the discrete indicator and allows the GMM sample to be expressed as:
\begin{align}
    \bm{\varepsilon}_i
    = \sum_{k=1}^{K} \tilde{z}_k \, \mu_k 
      + \sum_{k=1}^{K} \tilde{z}_k \, \Sigma_k^{1/2} \bm{\xi}_k,
\end{align}
where $\bm{\xi}_k \sim \mathcal{N}(0, I)$ is an auxiliary noise variable. Because all operations are differentiable, the entire perturbation generation process is trainable via standard backpropagation.

During training, we anneal the temperature $\tau$ from a higher initial value to a smaller final value, which encourages exploration early on and progressively sharpens the mixture assignments. This annealing strategy stabilizes optimization and prevents premature distribution collapse.

\subsection{Entropy Ratio}
\label{app_sec:ER}
Entropy Ratio (ER) quantifies the degree of mode dominance in a Gaussian Mixture Model (GMM). It measures how evenly the mixture weights 
$\boldsymbol{\pi} = (\pi_1,\ldots,\pi_K)$ are distributed across the $K$ components. Lower ER values indicate that the probability mass is 
concentrated on a single dominant mode (i.e., mode collapse), whereas values closer to $1$ suggest a more uniform mixture distribution.

Formally, ER is defined as
\begin{align}
\mathrm{ER}(\boldsymbol{\pi})
= \frac{H(\boldsymbol{\pi})}{\log K}
= \frac{-\sum_{k=1}^K \pi_k \log \pi_k}{\log K},
\end{align}
where $H(\boldsymbol{\pi})$ denotes the Shannon entropy of the mixture weights, and $\log K$ is the maximum possible entropy for a 
$K$-component mixture. This normalization ensures that $\mathrm{ER} \in [0,1]$, allowing consistent comparison across different values of $K$.

\section{Additional Experiments}
\label{app_sec:add_pert}

\begin{figure}[t]
  \centering
   \includegraphics[width=1\linewidth]{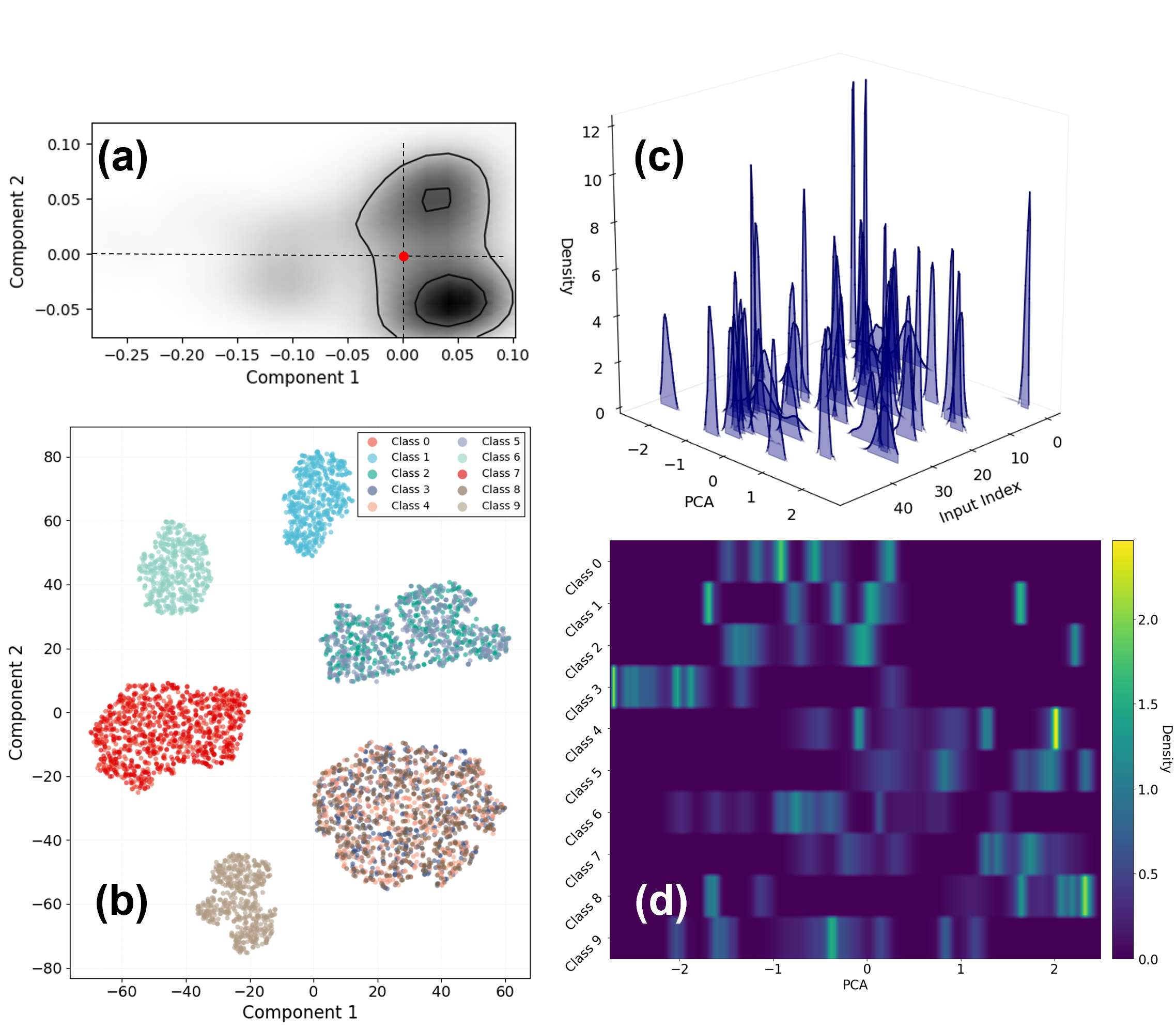}
   \caption{\textbf{Distribution under different dependence structures} Panel \textbf{(a)} shows the PCA-projected contour of the perturbation distribution for the independent case. Panel \textbf{(b)} visualizes the t-SNE embeddings of perturbations for the label-dependent case. Panel \textbf{(c)} presents PCA-based density plots for 50 randomly selected inputs under the input-dependent setting. Panel \textbf{(d)} displays a class-wise heatmap of perturbation densities for the jointly dependent case.}
   \label{fig:conditional_analysis_dist}
\end{figure}

\begin{figure}[t!]
  \centering
   \includegraphics[width=0.9\linewidth]{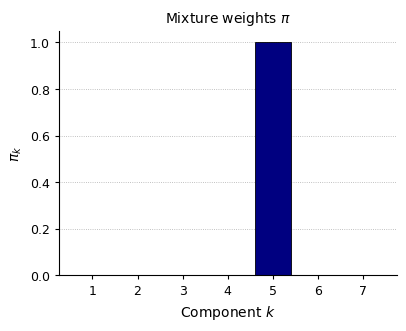}
   \caption{\textbf{Bar plot of mixture proportions.} We visualize the mixture proportions for the same input and model used in Fig.~\ref{app_fig:mu_sig_cond(x)} by plotting the corresponding bar chart.}
   \label{app_fig:pi_cond(x)}
\end{figure}

\begin{figure}[t!]
  \centering
   \includegraphics[width=0.9\linewidth]{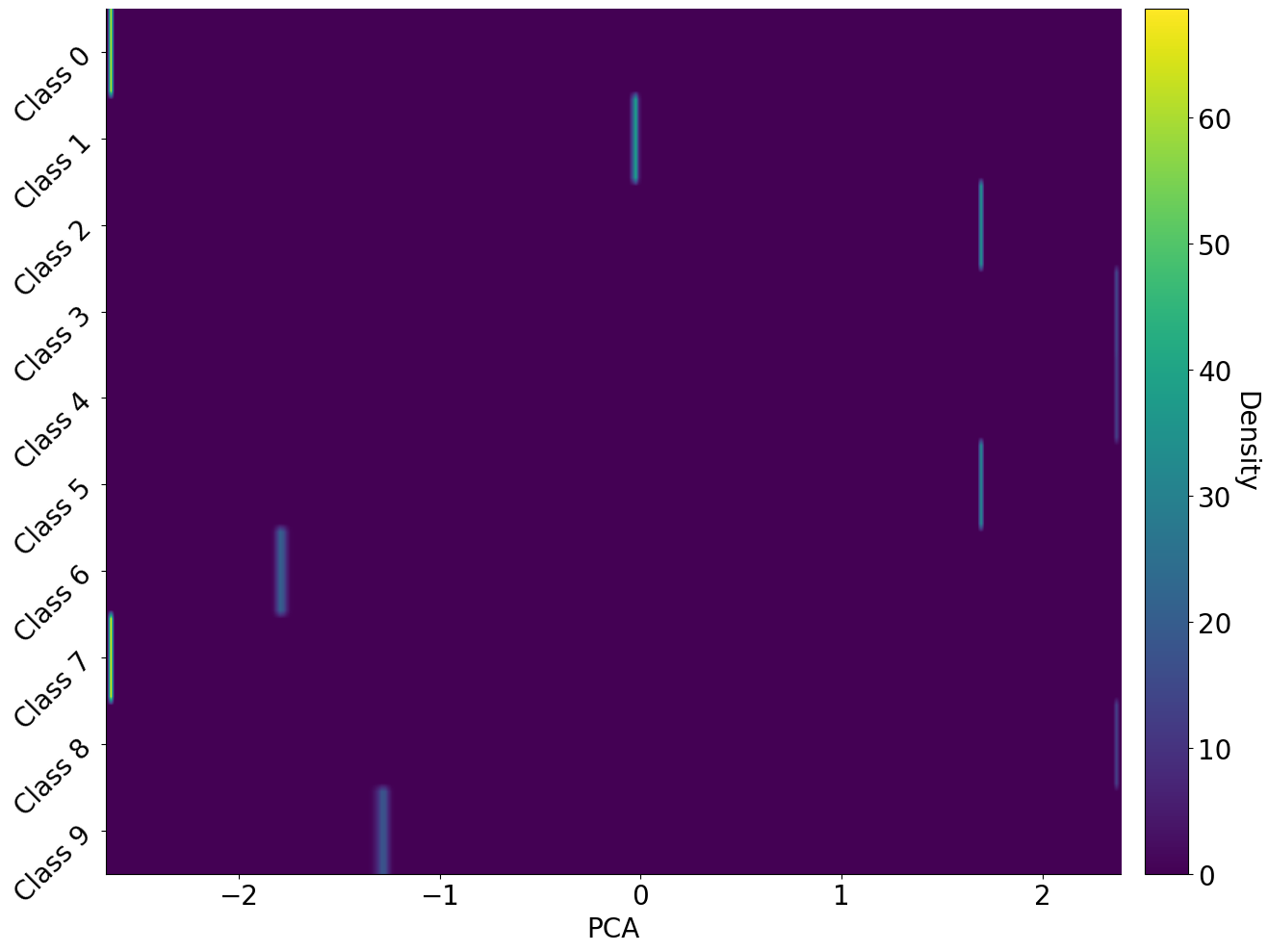}
   \caption{\textbf{Class-wise heatmap of perturbation densities for the label-dependent case.} The heatmap is generated from the learned distribution of ResNet on CIFAR-10, using the same experimental setup as panel~(d) in Fig.~\ref{fig:conditional_analysis_dist}.}
   \label{app_fig:band_cond(y)}
\end{figure}

Here, we provide additional results that further illustrate the characteristics of the learned perturbation distributions.

Fig.~\ref{fig:conditional_analysis_dist} illustrates the resulting distributions. Panel~(a) presents the perturbation distribution for the independent case, projected onto the first two principal components using PCA. The results show that the learned distribution allocates a greater portion of its probability mass away from the center (marked by the red dot), indicating increased diversity that encourages exploration of decision boundaries near the edges.

Panel~(b) visualizes the label-dependent perturbation distribution using t-SNE. The result reveals an interesting structural pattern. Although the distribution is conditioned on the ground-truth labels (10 in total), only 6 distinct clusters are formed. Some classes, such as class~7, are well separated, whereas others, e.g., classes~3 and~4, exhibit strong overlap, indicating shared perturbation characteristics. Fig.~\ref{app_fig:t_sne_xy} in Appendix~\ref{app:exp_setting} presents the t-SNE visualization for the jointly dependent case, where samples from all classes are clearly disentangled.

Panel~(c) randomly samples 50 CIFAR-10 inputs and applies PCA to their perturbations to visualize the dominant variation directions. Although an individual input may still exhibit mode collapse, which is consistent with the mixture-weight bar plots in Fig.~\ref{app_fig:pi_cond(x)} (Appendix~\ref{app_sec:add_pert}), different inputs activate distinct perturbation patterns. This diversity in perturbations results in substantially lower NPPR values.

Panel~(d) shows the heatmap for the jointly dependent case, where each band represents the perturbation distribution over 100 randomly selected inputs within a given class. Compared to the label-dependent case, the jointly dependent formulation offers greater flexibility in the distribution of each class (cf. Fig.~\ref{app_fig:band_cond(y)} in Appendix~\ref{app_sec:add_pert}).

\begin{table*}[t]
\centering
\caption{\textbf{Inference wall-clock time across datasets and models (seconds).} 
NPPR corresponds to the GMM-based estimator. Compared with fixed-family PR baselines, NPPR introduces only a modest additional inference cost, while remaining substantially faster than AutoAttack. Values in parentheses denote the per-epoch training wall-clock time (seconds) of NPPR on a single RTX 3090.}
\label{tab:inference_time}
\resizebox{0.95\textwidth}{!}{
\begin{tabular}{l|lccccccc}
\toprule
\textbf{Dataset} & \textbf{Model} & $\widehat{\mathcal{G}}_{\mathrm{NPPR}}$ & $\widehat{\mathcal{G}}_{\mathrm{PR}_{\text{Gaussian}}}$ & $\widehat{\mathcal{G}}_{\mathrm{PR}_{\text{Uniform}}}$ & $\widehat{\mathcal{G}}_{\mathrm{PR}_{\text{Laplace}}}$ & $\widehat{\mathcal{G}}_{\mathrm{AR}_{\text{PGD}}}$ & $\widehat{\mathcal{G}}_{\mathrm{AR}_{\text{CW}}}$ & $\widehat{\mathcal{G}}_{\mathrm{AR}_{\text{AA}}}$ \\
\midrule
\multirow{4}{*}{CIFAR-10}
& ResNet18 & 3.96(69.45)   & 2.55  & 2.55  & 2.63  & 2.96  & 2.70  & 30.20  \\
& ResNet50 & 11.34(154.78) & 9.52  & 9.57  & 9.65  & 8.25  & 7.93  & 78.18  \\
& WRN50 & 16.15(226.24) & 14.07 & 14.07 & 14.16 & 13.76 & 13.40 & 127.12 \\
& VGG16    & 16.31(207.23) & 14.11 & 14.16 & 14.28 & 11.23 & 11.01 & 106.80 \\
\midrule
\multirow{4}{*}{CIFAR-100}
& ResNet18 & 3.98(64.62)   & 2.54  & 2.54  & 2.63  & 3.04  & 2.70  & 96.42  \\
& ResNet50 & 11.34(154.41) & 9.50  & 9.56  & 9.66  & 8.21  & 7.83  & 79.04  \\
& WRN50 & 16.34(229.04) & 14.21 & 14.22 & 14.31 & 13.78 & 13.45 & 120.93 \\
& VGG16    & 16.49(195.08) & 14.56 & 14.60 & 14.81 & 11.40 & 11.17 & 96.23  \\
\midrule
\multirow{4}{*}{TinyImageNet}
& ResNet18 & 8.36(203.23)  & 6.57  & 6.59  & 6.93  & 6.11  & 5.78  & 40.08  \\
& ResNet50 & 23.60(565.14) & 20.98 & 21.03 & 21.39 & 18.37 & 18.04 & 143.27 \\
& WRN50 & 37.16(907.58) & 34.29 & 34.26 & 34.56 & 30.28 & 29.66 & 244.73 \\
& VGG16    & 43.73(994.21) & 40.41 & 40.56 & 41.03 & 28.66 & 28.10 & 225.54 \\
\bottomrule
\end{tabular}}
\end{table*}

Fig.~\ref{app_fig:t_sne_xy} presents the t-SNE visualization corresponding to Fig.~\ref{fig:conditional_analysis_dist} panel (b), but using the joint-dependence structure instead of label dependence. As shown, perturbation samples from all classes become clearly disentangled, indicating that joint dependence yields a substantially more diverse and well-separated perturbation distribution than label dependence.  

Fig.~\ref{app_fig:mu_sig_cond(x)} and \ref{app_fig:pi_cond(x)} present bar plots of the mixture means after applying PCA (reduced to 32 dimensions), heatmaps of the covariance matrices, and bar plots of the mixture proportions. The results show that only a single mixture component remains active, thereby dominating the distribution. The covariance heatmaps further reveal that this dominant component exhibits strong diagonal values with minimal off-diagonal structure, indicating limited correlation across dimensions.

Fig.~\ref{app_fig:band_cond(y)} shows the heatmap of perturbation distributions grouped by labels. Compared with Fig.~\ref{fig:conditional_analysis_dist} panel (d), the joint-dependent case exhibits substantially greater diversity within each label group, indicating a more varied perturbation distribution.

In Tab.~\ref{tab:robustness_results}, NPPR captures the vulnerability trend under natural corruptions more faithfully than fixed-distribution PR baselines. In particular, fixed Gaussian, Uniform, and Laplace PR often produce overly high robustness scores, especially for adversarially trained models, making them less informative about actual model fragility. In contrast, even though our feature extractor is trained on standard rather than adversarially trained models, NPPR still reveals model vulnerability and aligns better with corruption robustness, suggesting that the learned non-parametric distribution provides a more sensitive robustness estimate when fixed perturbation distributions largely fail.

Tab.~\ref{tab:all_ablation} shows the extended experiments of our proposed pipeline using ResNet18 on CIFAR-10. As indicated, the results match our observations in the ablation study in Tab.~\ref{tab:ablation}. In addition, we include a setting where we directly optimize in the input space without the up-sampling module. Due to the large input dimensionality, this greatly increases the number of parameters and computational cost, and leads to a very unstable training trajectory. Without an up-sampler, it also hardly generalizes to high-resolution images, e.g., $3 \times 224 \times 224$ in ImageNet. Hence, we exclude it from our main experiments, though it has a better performance. In the table, in addition to the estimated NPPR on the test set, $\widehat{\mathcal{G}}_{\mathrm{NPPR}_{\text{test}}}$, and the entropy ratio, we also report the maximum, minimum, and standard deviation of the mixture proportions $\bm{\pi}$.

\begin{table}[t]
\centering
\caption{\textbf{NPPR across evaluation radii and architectures (\%).} 
Estimators are trained at an $L_{\infty}$ perturbation radius of $16/255$ and evaluated under different $L_{\infty}$ radii across multiple datasets and architectures. All values are percentages.}
\label{tab:cross_radius_multi}
\resizebox{0.98\columnwidth}{!}{
\begin{tabular}{l|lcccc}
\toprule
\textbf{Dataset} & \textbf{Model} & \textbf{$4/255$} & \textbf{$8/255$} & \textbf{$16/255$} & \textbf{$32/255$} \\
\midrule
\multirow{4}{*}{CIFAR-10}
& ResNet18 & 96.75 & 90.99 & 76.27 & 52.49 \\
& ResNet50 & 98.51 & 94.99 & 86.81 & 68.41 \\
& WRN50 & 97.76 & 93.15 & 81.32 & 59.45 \\
& VGG16    & 96.27 & 90.06 & 74.47 & 49.72 \\
\midrule
\multirow{4}{*}{CIFAR-100}
& ResNet18 & 95.71 & 84.10 & 58.60 & 27.74 \\
& ResNet50 & 96.03 & 84.46 & 58.92 & 27.49 \\
& WRN50 & 94.06 & 82.19 & 58.83 & 28.26 \\
& VGG16    & 94.26 & 79.92 & 52.20 & 23.18 \\
\midrule
\multirow{4}{*}{TinyImageNet}
& ResNet18 & 93.24 & 78.43 & 53.29 & 27.09 \\
& ResNet50 & 95.37 & 85.10 & 64.37 & 31.75 \\
& WRN50 & 94.83 & 83.40 & 59.86 & 28.89 \\
& VGG16    & 94.94 & 82.64 & 59.00 & 25.51 \\
\bottomrule
\end{tabular}
}
\end{table}

\begin{table*}[t]
\centering
\caption{\textbf{Comparison of NPPR with a weak-adversary baseline and intermediate-stress baselines on ResNet18.} The weak-adversary baseline uses one-step PGD with random Gaussian initialization of variance $0.1$. The intermediate-stress formulation follows Rice et al.~\cite{rice2021robustness} and is normalized by the standard loss.}
\label{tab:weak_intermediate_baselines}
% \resizebox{0.98\columnwidth}{!}{
\begin{tabular}{lcccccc}
\toprule
\textbf{Dataset} & $\widehat{\mathcal{G}}_{\mathrm{NPPR}}$ & $\widehat{\mathcal{G}}_{\mathrm{AR}_{\text{PGD-1}}}$ & \textbf{$q=1/\mathrm{std}$} & \textbf{$q=10/\mathrm{std}$} & \textbf{$q=100/\mathrm{std}$} & \textbf{$q=1000/\mathrm{std}$} \\
\midrule
CIFAR-10     & 0.76 & 0.87 & 0.18 & 0.54 & 0.86 & 0.91 \\
CIFAR-100    & 0.59 & 0.72 & 0.36 & 0.62 & 0.92 & 0.98 \\
TinyImageNet & 0.53 & 0.77 & 0.63 & 0.75 & 0.98 & 1.04 \\
\bottomrule
\end{tabular}
% }
\end{table*}

\subsection{Additional Results on Scalability, Inference Cost, and Training Overhead}
\label{app_sec:time_cost}

In this section, we provide additional computational cost analysis for NPPR from both the inference and training perspectives. Since NPPR estimates robustness under an unknown perturbation distribution by fitting a conservative perturbation model, it introduces extra computational cost compared with standard PR methods that assume a fixed perturbation family. To give a more complete picture of this overhead, we report both inference wall-clock time and estimator training time across different architectures and datasets.

\paragraph{Inference wall-clock time}
Tab.~\ref{tab:inference_time} summarizes the wall-clock inference time across all considered architectures and datasets, including standard PR under several fixed perturbation families, the GMM-based NPPR estimator, and attack-based baselines. Overall, NPPR is consistently only moderately slower than standard PR variants such as Gaussian, uniform, and Laplace PR, while remaining substantially faster than stronger attack-based baselines such as AutoAttack (AA).

For example, on ResNet18 with CIFAR-10, NPPR requires 3.96 seconds, compared with 2.55 seconds for PR-Gaussian and 30.20 seconds for AA. On a larger configuration, namely WRN50 with TinyImageNet, NPPR requires 37.16 seconds, whereas AA requires 244.73 seconds. These results suggest that incorporating perturbation-distribution learning leads to a moderate increase in inference cost relative to fixed-family PR baselines, but still preserves a clear efficiency advantage over strong attack-based evaluations.

\paragraph{Training wall-clock time}
In addition to inference cost, we also report, in parentheses in the NPPR column, the per-epoch training wall-clock time of the NPPR estimator. This is included to provide a compact view of both inference and estimator-fitting cost in a single table. NPPR incurs extra training overhead because the perturbation estimator itself must be optimized. To quantify this overhead, we measure the wall-clock training time of the GMM-based NPPR estimator on a single RTX 3090 and report the per-epoch time.

As expected, the fitting cost grows with both dataset scale and model complexity. For example, on CIFAR-10, the per-epoch training time increases from $69.45$ seconds for ResNet18 to $226.24$ seconds for WRN50. On TinyImageNet, the corresponding cost further increases to $203.23$ seconds for ResNet18, $565.14$ seconds for ResNet50, $907.58$ seconds for WRN50, and $994.21$ seconds for VGG16. This indicates that, although NPPR remains practical at the scales considered in this work, estimator fitting constitutes the dominant source of additional computational overhead.

At the same time, these results suggest several possible directions for improving efficiency, such as adopting a lighter perturbation estimator, sharing feature extractors, or reducing the frequency of estimator updates during training.

\subsection{Cross-radius behavior across architectures and datasets.}
\label{app_sec:cross-radius}

To test whether the above trend is specific to a single architecture--dataset pair, we additionally fix the training radius at $16/255$ and evaluate the learned estimator across multiple models and datasets. The results are reported in Tab.~\ref{tab:cross_radius_multi}. The same monotone trend holds consistently across all tested settings: larger evaluation radii lead to lower NPPR estimates. This behavior is observed for all architectures and datasets, including CIFAR-10, CIFAR-100, and TinyImageNet.

For example, on ResNet50/CIFAR-10, the NPPR score decreases from $98.51$ at $4/255$ to $94.99$ at $8/255$, $86.81$ at $16/255$, and $68.41$ at $32/255$. Similarly, on VGG16/TinyImageNet, it decreases from $94.94$ to $82.64$, $59.00$, and $25.51$. Overall, these results suggest that the learned NPPR estimator remains stable under train/test radius mismatch, while still reflecting the expected degradation in robustness as the perturbation radius increases.

\begin{table*}[t]
\centering
\caption{\textbf{Robustness evaluation across datasets and models (\%).}
NPPR refers to the GMM-based estimator. We report clean accuracy, probabilistic robustness (PR), adversarial robustness (AR), and corruption robustness under both standard and adversarial training. \textbf{Note.} S\&P, MB, Bright., and JPEG denote Salt-and-Pepper noise, Motion Blur, Brightness, and JPEG compression, respectively. VGG16 is excluded from adversarial training due to convergence failure. PGD/CW are evaluated with 3 steps for standardly trained models and 10 steps for adversarially trained models; adversarial training uses PGD10. Corruptions are evaluated at severity level 1, and all robustness results are normalized by the corresponding clean accuracy.}
\label{tab:robustness_results}
\resizebox{1\textwidth}{!}{
\begin{tabular}{l|lcccccccccccc}
\toprule
\textbf{Dataset} & \textbf{Model} 
& \textbf{Clean Acc.}
& $\widehat{\mathcal{G}}_{\mathrm{NPPR}}$ (Ours)
& $\widehat{\mathcal{G}}_{\mathrm{PR}_{\text{Gaussian}}}$
& $\widehat{\mathcal{G}}_{\mathrm{PR}_{\text{Uniform}}}$
& $\widehat{\mathcal{G}}_{\mathrm{PR}_{\text{Laplace}}}$
& $\widehat{\mathcal{G}}_{\mathrm{AR}_{\text{PGD}}}$
& $\widehat{\mathcal{G}}_{\mathrm{AR}_{\text{CW}}}$
& $\widehat{\mathcal{G}}_{\mathrm{AR}_{\text{AA}}}$
& \textbf{S\&P}
& \textbf{MB}
& \textbf{Brightness}
& \textbf{JPEG} \\
\midrule
\midrule
\multicolumn{14}{c}{\textbf{Standard Training}} \\
\midrule
\multirow{4}{*}{CIFAR-10}
& ResNet18 & 87.72 & 76.29 & 95.28 & 97.21 & 95.13 & 3.10 & 3.30 & 0.00 & 83.63 & 73.65 & 99.66 & 97.31 \\
& ResNet50 & 90.75 & 86.84 & 92.42 & 94.94 & 92.23 & 5.80 & 6.05 & 0.00 & 88.02 & 76.45 & 100.11 & 95.93 \\
& WRN50    & 91.13 & 81.32 & 94.20 & 96.25 & 94.02 & 6.94 & 7.24 & 0.01 & 88.98 & 77.30 & 100.26 & 96.13 \\
& VGG16    & 92.28 & 74.53 & 89.08 & 93.36 & 88.76 & 0.49 & 0.31 & 0.00 & 84.90 & 76.56 & 99.90 & 93.84 \\
\midrule
\multirow{4}{*}{CIFAR-100}
& ResNet18 & 63.38 & 58.65 & 86.51 & 91.48 & 86.17 & 2.45 & 3.02 & 0.02 & 69.03 & 60.18 & 99.40 & 92.21 \\
& ResNet50 & 70.57 & 58.93 & 80.73 & 86.78 & 80.31 & 3.47 & 3.69 & 0.01 & 76.39 & 59.33 & 100.18 & 88.51 \\
& WRN50    & 72.02 & 58.88 & 82.88 & 88.24 & 82.58 & 4.56 & 4.94 & 0.01 & 77.20 & 59.50 & 99.89 & 89.03 \\
& VGG16    & 71.02 & 52.13 & 75.86 & 83.48 & 75.44 & 0.98 & 0.81 & 0.00 & 78.26 & 56.62 & 99.94 & 85.24 \\
\midrule
\multirow{4}{*}{TinyImageNet}
& ResNet18 & 56.99 & 53.20 & 92.26 & 95.17 & 92.07 & 0.45 & 0.52 & 0.00 & 71.61 & 33.78 & 99.96 & 99.33 \\
& ResNet50 & 70.39 & 64.46 & 92.09 & 95.19 & 91.90 & 3.90 & 2.12 & 0.00 & 72.79 & 42.36 & 99.74 & 99.76 \\
& WRN50    & 73.74 & 59.88 & 93.60 & 96.19 & 93.43 & 4.79 & 3.18 & 0.00 & 72.31 & 54.85 & 99.76 & 99.65 \\
& VGG16    & 66.95 & 59.01 & 93.07 & 95.70 & 92.87 & 4.12 & 0.21 & 0.00 & 70.71 & 42.30 & 100.16 & 100.10 \\
\midrule
\midrule
\multicolumn{14}{c}{\textbf{Adversarial Training (PGD-10)}} \\
\midrule
\multirow{3}{*}{CIFAR-10}
& ResNet18 & 72.42 & 95.59 & 99.78 & 99.84 & 99.76 & 60.67 & 56.55 & 51.53 & 95.02 & 89.34 & 98.34 & 99.35 \\
& ResNet50 & 75.29 & 95.66 & 99.66 & 99.80 & 99.65 & 59.74 & 56.74 & 51.27 & 92.92 & 89.27 & 97.84 & 98.75 \\
& WRN50    & 78.83 & 96.33 & 99.86 & 99.97 & 99.87 & 58.33 & 55.80 & 50.35 & 92.83 & 88.16 & 98.20 & 98.95 \\
\midrule
\multirow{3}{*}{CIFAR-100}
& ResNet18 & 49.39 & 89.54 & 99.96 & 99.95 & 99.92 & 43.88 & 39.89 & 34.64 & 90.61 & 81.80 & 97.67 & 97.89 \\
& ResNet50 & 52.14 & 85.88 & 100.06 & 100.09 & 100.07 & 40.26 & 38.01 & 32.83 & 88.70 & 81.40 & 98.85 & 97.10 \\
& WRN50    & 21.10 & 94.28 & 100.11 & 100.00 & 100.03 & 57.44 & 45.73 & 40.28 & 100.00 & 90.43 & 96.35 & 99.62 \\
\midrule
\multirow{3}{*}{TinyImageNet}
& ResNet18 & 21.34 & 91.06 & 99.70 & 99.78 & 99.73 & 44.05 & 32.71 & 27.65 & 90.58 & 72.91 & 94.89 & 99.39 \\
& ResNet50 & 27.94 & 93.21 & 99.79 & 99.72 & 99.76 & 39.30 & 32.14 & 27.27 & 89.08 & 73.51 & 95.85 & 99.79 \\
& WRN50    & 33.37 & 88.95 & 99.84 & 99.93 & 99.81 & 30.33 & 25.77 & 21.22 & 88.97 & 66.50 & 96.19 & 99.55 \\
\bottomrule
\end{tabular}}
\end{table*}
\subsection{Comparison with Weak-Adversary and Intermediate-Stress Baselines}
\label{app_sec:weak_baseline}

We further compare NPPR with two related baselines on ResNet18, i.e., a weak-adversary baseline based on one-step PGD with random Gaussian initialization (variance $0.1$), and the intermediate-stress formulation of Rice et al.~\cite{rice2021robustness}, where the latter is normalized by the standard loss for easier comparison. The goal of this comparison is to examine whether NPPR can be reduced to a weak attack score or simply interpreted as another tuned intermediate-severity robustness metric.

Tab.~\ref{tab:weak_intermediate_baselines} shows that NPPR behaves differently from both baselines across datasets. On CIFAR-10, NPPR is $0.76$, compared with $0.87$ for the weak-adversary baseline. The intermediate-stress values vary substantially with the stress parameter $q$, ranging from $0.18$ to $0.91$. A similar pattern is observed on CIFAR-100 and TinyImageNet. In particular, for larger $q$, the intermediate-stress score approaches or even exceeds $1$, while NPPR remains much lower. This indicates that NPPR is not simply another tuned intermediate-severity score.

Conceptually, the distinction is also important. The intermediate-stress formulation of Rice et al.~\cite{rice2021robustness} studies robustness notions between average-case and worst-case evaluation under a fixed perturbation measure. In contrast, NPPR is motivated by uncertainty over the perturbation distribution itself, and evaluates robustness conservatively over an admissible family of perturbation distributions. Therefore, NPPR addresses a different source of uncertainty and should be viewed as complementary rather than equivalent to these baselines.

\begin{table*}[t]
\centering
\caption{\textbf{Extended Experimental Results of ResNet18 on CIFAR-10 (\%).}
We report the GMM-based NPPR results under different perturbation budgets $\gamma \in \{4,8,16\}/255$. Since the diagnostic statistics are identical across the three perturbation budgets for each configuration, we report them only once.}
\label{tab:all_ablation}
\renewcommand{\arraystretch}{1.05}
\begin{tabular}{c|c|ccc|cccc}
\toprule
\multicolumn{2}{c|}{\textbf{Config.}} &
\multicolumn{3}{c|}{${\widehat{\mathcal{G}}}_{\mathrm{NPPR}}$} &
\multicolumn{4}{c}{Diagnosis} \\
% \cmidrule(lr){1-5}
\midrule
$K$ & Decoder &
$4/255$ & $8/255$ & $16/255$ &
ER($\bm{\pi}$) & Max($\bm{\pi}$) & Min($\bm{\pi}$) & Std.($\bm{\pi}$) \\
\midrule
\multicolumn{9}{c}{\textbf{(I) Independent Perturbations}} \\
\midrule
3  & None          & 97.53 & 86.97 & 52.05 & 0.00 & 100.00 & 0.00 & 57.74 \\
3  & Non-trainable & 97.59 & 91.56 & 74.99 & 0.21 & 99.98 & 0.01 & 57.71 \\
3  & Trainable     & 98.12 & 93.62 & 81.27 & 91.35 & 54.43 & 20.94 & 18.37 \\
\midrule
7  & None          & 97.11 & 85.52 & 52.73 & 0.00 & 100.00 & 0.00 & 37.80 \\
7  & Non-trainable & 97.47 & 91.01 & 73.28 & 0.09 & 99.98 & 0.00 & 37.79 \\
7  & Trainable     & 97.51 & 91.10 & 73.25 & 94.37 & 31.49 & 8.38 & 7.93 \\
\midrule
12 & None          & 97.69 & 87.18 & 55.04 & 0.00 & 100.00 & 0.00 & 28.87 \\
12 & Non-trainable & 97.50 & 91.14 & 73.53 & 0.08 & 99.98 & 0.00 & 28.86 \\
12 & Trainable     & 97.70 & 91.92 & 74.43 & 92.43 & 20.11 & 3.83 & 5.83 \\
\midrule
\midrule
\multicolumn{9}{c}{\textbf{(II) Input-Dependent Perturbations ($\bm{x}$-dependent)}} \\
\midrule
3  & None          & 98.20 & 92.06 & 68.96 & 70.13 & 72.06 & 9.34 & 33.85 \\
3  & Non-trainable & 98.43 & 94.11 & 85.36 & 57.00 & 77.45 & 3.22 & 39.05 \\
3  & Trainable     & 98.55 & 95.01 & 84.76 & 69.45 & 72.37 & 8.88 & 34.16 \\
\midrule
7  & None          & 98.30 & 91.86 & 69.00 & 52.95 & 65.20 & 0.00 & 23.39 \\
7  & Non-trainable & 98.59 & 94.68 & 82.18 & 47.79 & 69.38 & 0.00 & 25.19 \\
7  & Trainable     & 98.67 & 95.01 & 84.17 & 54.28 & 67.08 & 0.03 & 23.92 \\
\midrule
12 & None          & 98.13 & 92.71 & 69.65 & 44.72 & 70.84 & 0.00 & 19.93 \\
12 & Non-trainable & 98.25 & 94.08 & 82.43 & 47.02 & 64.17 & 0.00 & 18.22 \\
12 & Trainable     & 98.46 & 94.43 & 83.31 & 48.74 & 64.85 & 0.00 & 18.36 \\
\midrule
\midrule
\multicolumn{9}{c}{\textbf{(III) Label-Dependent Perturbations ($y$-dependent)}} \\
\midrule
3  & None          & 94.88 & 81.82 & 45.86 & 99.02 & 40.31 & 29.18 & 6.08 \\
3  & Non-trainable & 96.33 & 90.42 & 75.79 & 99.22 & 39.46 & 29.26 & 5.40 \\
3  & Trainable     & 95.48 & 87.62 & 69.08 & 99.14 & 39.87 & 29.62 & 5.68 \\
\midrule
7  & None          & 93.81 & 78.50 & 36.92 & 73.68 & 39.45 & 0.00 & 14.78 \\
7  & Non-trainable & 95.43 & 88.56 & 72.05 & 72.81 & 40.14 & 0.01 & 15.20 \\
7  & Trainable     & 95.57 & 86.51 & 64.88 & 87.29 & 30.16 & 0.00 & 9.70 \\
\midrule
12 & None          & 94.12 & 78.71 & 39.21 & 54.65 & 50.20 & 0.00 & 14.70 \\
12 & Non-trainable & 96.10 & 90.45 & 75.76 & 70.64 & 20.78 & 0.01 & 9.35 \\
12 & Trainable     & 95.15 & 86.27 & 64.87 & 76.59 & 20.24 & 0.01 & 8.25 \\
\midrule
\midrule
\multicolumn{9}{c}{\textbf{(IV) Joint-Dependent Perturbations ($\bm{x},y$-dependent)}} \\
\midrule
3  & None          & 96.78 & 87.27 & 58.10 & 93.94 & 50.08 & 20.74 & 15.10 \\
3  & Non-trainable & 96.66 & 90.90 & 79.20 & 99.23 & 39.49 & 29.65 & 5.37 \\
3  & Trainable     & 97.14 & 92.38 & 78.59 & 99.16 & 39.77 & 29.49 & 5.61 \\
\midrule
7  & None          & 95.83 & 84.57 & 51.76 & 97.36 & 20.35 & 9.92 & 4.99 \\
7  & Non-trainable & 96.18 & 89.79 & 73.68 & 87.06 & 29.73 & 0.00 & 9.78 \\
7  & Trainable     & 96.77 & 90.98 & 76.27 & 89.63 & 20.86 & 0.00 & 7.99 \\
\midrule
12 & None          & 95.59 & 83.10 & 47.73 & 76.09 & 20.43 & 0.00 & 8.29 \\
12 & Non-trainable & 95.84 & 88.94 & 72.68 & 73.83 & 30.23 & 0.00 & 9.38 \\
12 & Trainable     & 96.07 & 89.29 & 73.11 & 73.31 & 31.52 & 0.00 & 9.62 \\
\bottomrule
\end{tabular}
\end{table*}

\end{document}